\begin{document}

\title{Stochastic Smoothing for Nonsmooth Minimizations:\\ Accelerating SGD by Exploiting Structure}

\author{\name Hua Ouyang, Alexander Gray\hspace{3em} \email \{houyang, agray\}@cc.gatech.edu\\
\addr College of Computing\\
Georgia Institute of Technology
}

\editor{Leslie Pack Kaelbling}

\maketitle

\begin{abstract}
In this work we consider the stochastic minimization of nonsmooth convex loss functions, a central problem in machine learning. We propose a novel algorithm called \textsf{A}ccelerated \textsf{N}onsmooth \textsf{S}tochastic \textsf{G}radient \textsf{D}escent (\textsf{ANSGD}), which exploits the structure of common nonsmooth loss functions to achieve optimal convergence rates for a class of problems including SVMs. It is the first stochastic algorithm that can achieve the optimal $O(1/t)$ rate for minimizing nonsmooth loss functions (with strong convexity). The fast rates are confirmed by empirical comparisons, in which \textsf{ANSGD} significantly outperforms previous subgradient descent algorithms including SGD.
\end{abstract}
\footnotetext[1]{A short version of this paper appears in International Conference of Machine Learning (ICML) 2012.}

\section{Introduction}
\label{sec:intro}
Nonsmoothness is a central issue in machine learning computation, as many important methods minimize nonsmooth convex functions. For example, using the nonsmooth hinge loss yields sparse support vector machines; regressors can be made robust to outliers by using the nonsmooth absolute loss other than the squared loss; the $l1$-norm is widely used in sparse reconstructions. In spite of the attractive properties, nonsmooth functions are theoretically more difficult to optimize than smooth functions \cite{nemirovski83pcmeo}. In this paper we focus on minimizing nonsmooth functions where the functions are either stochastic (stochastic optimization), or learning samples are provided incrementally (online learning).

Smoothness and strong-convexity are typically certificates of the existence of fast global solvers. Nesterov's deterministic smoothing method \cite{nesterov05smnsf} deals with the difficulty of nonsmooth functions by approximating them with smooth functions, for which optimal methods \cite{nesterov04ilco} can be applied. It converges as $f(\mathbf{x}_t)-\min_{\mathbf{x}}f(\mathbf{x})\leq O(1/t)$ after $t$ iterations. If a nonsmooth function is strongly convex, this rate can be improved to $O(1/t^2)$ using the excessive gap technique \cite{nesterov05egtncm}.

In this paper, we extend Nesterov's smoothing method to the stochastic setting by proposing a stochastic smoothing method for nonsmooth functions. Combining this with a stochastic version of the optimal gradient descent method, we introduce and analyze a new algorithm named \textsf{A}ccelerated \textsf{N}onsmooth \textsf{S}tochastic \textsf{G}radient \textsf{D}escent (\textsf{ANSGD}), for a class of functions that include the popular ML methods of interest.

To our knowledge \textsf{ANSGD} is the first stochastic first-order algorithm that can achieve the optimal $O(1/t)$ rate for minimizing nonsmooth loss functions without Polyak's averaging \cite{polyak92asaa}. In comparison, the classic SGD converges in $O(\ln t/t)$ for nonsmooth strongly convex functions \cite{sss07pegasos}, and is usually not robust \cite{nemirovski09rsaasp}. Even with Polyak's averaging \cite{bach11naasaaml,xu11tooplslasgd}, there are cases where SGD's convergence rate still can not be faster than $O(\ln t/t)$ \cite{shamir11mgdoscso}. Numerical experiments on real-world datasets also indicate that \textsf{ANSGD} converges much faster in comparing with these state-of-the-art algorithms.

A perturbation-based smoothing method is recently proposed for stochastic nonsmooth minimization \cite{duchi11rsso}. This work achieves similar iteration complexities as ours, in a parallel computation scenario. In serial settings, \textsf{ANSGD} enjoys better and optimal bounds.




In machine learning, many problems can be cast as minimizing a composition of a loss function and a regularization term. Before proceeding to the algorithm, we first describe a different setting of ``composite minimizations'' that we will pursue in this paper, along with our notations and assumptions.
\subsection{A Different ``Composite Setting''}
In the classic \emph{black-box} setting of first-order stochastic algorithms \cite{nemirovski09rsaasp}, the structure of the objective function $\min_{\mathbf{x}}\{ f(\mathbf{x})=\mathbb{E}_{\boldsymbol{\xi}}f(\mathbf{x},\boldsymbol{\xi}):\boldsymbol{\xi}\sim P\}$ is unknown. In each iteration $t$, an algorithm can only access the first-order stochastic oracle and obtain a subgradient $f^{\prime}(\mathbf{x},\boldsymbol{\xi}_t)$. The basic assumption is that $f^{\prime}(\mathbf{x}) = \mathbb{E}_{\boldsymbol{\xi}}f^{\prime}(\mathbf{x},\boldsymbol{\xi})$ for any $\mathbf{x}$, where the random vector $\boldsymbol{\xi}$ is from a fixed distribution $P$.

The \emph{composite setting} (also known as \emph{splitting} \cite{lions79splitting}) is an extension of the black-box model. It was proposed to exploit the structure of objective functions. Driven by applications of sparse signal reconstruction, it has gained significant interest from different communities \cite{daubechies04ita,beck09fista,nesterov07cof}. Stochastic variants have also been proposed recently \cite{lan10omsco,lan11osaascsco1,duchi09fobos,hu09agmsool,xiao10damrsloo}. A stochastic composite function $\Phi(\mathbf{x}):=f(\mathbf{x})+g(\mathbf{x})$ is the sum of a smooth stochastic convex function $f(\mathbf{x})=\mathbb{E}_{\boldsymbol{\xi}}f(\mathbf{x},\boldsymbol{\xi})$ and a nonsmooth (but simple and deterministic) function $g()$. To minimize $\Phi$, previous work construct the following model iteratively:
\begin{equation}\label{eq:comp_set}
\langle \nabla f(\mathbf{x}_t,\boldsymbol{\xi}_t),\mathbf{x}-\mathbf{x}_t\rangle+\frac{1}{\eta_{t}}D(\mathbf{x},\mathbf{x}_t)+g(\mathbf{x}),
\end{equation}
where $\nabla f(\mathbf{x}_t,\boldsymbol{\xi}_t)$ is a gradient, $D(\cdot,\cdot)$ is a proximal function (typically a Bregman divergence) and $\eta_t$ is a stepsize.

A successful application of the composite idea typically relies on the assumption that model (\ref{eq:comp_set}) is easy to minimize. If $g()$ is very simple, e.g. $\|\mathbf{x}\|_1$ or the nuclear norm, it is straightforward to obtain the minimum in analytic forms. However, this assumption does not hold for many other applications in machine learning, where many loss functions (not the regularization term, here the nonsmooth $g()$ becomes the nonsmooth loss function) are nonsmooth, and do not enjoy separability properties \cite{wright09srsa}. This includes important examples such as hinge loss, absolute loss, and $\epsilon$-insensitive loss.

In this paper, we tackle this problem by studying a new stochastic composite setting: $\min_{\mathbf{x}}\Phi(\mathbf{x})=f(\mathbf{x})+g(\mathbf{x})$, where loss function $f()$ is convex and nonsmooth, while $g()$ is convex and $L_g$-Lipschitz smooth:
\begin{equation}\label{eq:g_lipschitz}
g(\mathbf{x})\leq g(\mathbf{y}) + \langle \nabla g(\mathbf{y}),\mathbf{x}-\mathbf{y} \rangle + \frac{L_g}{2}\|\mathbf{x}-\mathbf{y}\|^2.
\end{equation}
For clarity, in this paper we focus on unconstrained minimizations. Without loss of generality, we assume that both $f()$ and $g()$ are stochastic: $f(\mathbf{x})=\mathbb{E}_{\boldsymbol{\xi}}f(\mathbf{x},\boldsymbol{\xi})$ and $g(\mathbf{x})=\mathbb{E}_{\boldsymbol{\xi}}g(\mathbf{x},\boldsymbol{\xi})$, where $\boldsymbol{\xi}$ has distribution $P$. If either one is deterministic, its $\boldsymbol{\xi}$ is then dropped. To make our algorithm and analysis more general, we assume that $g()$ is $\mu$-strongly convex: $\forall \mathbf{x},\mathbf{y}$,
\begin{equation}\label{eq:g_mu_str}
g(\mathbf{x})\geq g(\mathbf{y}) + \langle \nabla g(\mathbf{y}),\mathbf{x}-\mathbf{y} \rangle + \frac{\mu}{2}\|\mathbf{x}-\mathbf{y}\|^2.
\end{equation}
If it is not strongly convex, one can simply take $\mu=0$.

The main idea of our algorithm again stems from exploiting the structures of $f()$ and $g()$. In Section \ref{sec:approach} we propose to form a smooth stochastic approximation of $f()$, such that the optimal methods \cite{nesterov04ilco} can be applied to attain optimal convergence rates. The convergence of our proposed algorithm is analyzed in Section \ref{sec:analysis}, and a batch-to-online conversion is also proposed. Two popular machine learning problems are chosen as our examples in Section \ref{sec:examples}, and numerical evaluations are presented in Section \ref{sec:exp}. All proofs in this paper are provided in the appendix. 

\section{Approach}\label{sec:approach}
\subsection{Stochastic Smoothing Method}
An important breakthrough in nonsmooth minimization was made by Nesterov in a series of works \cite{nesterov05smnsf,nesterov05egtncm,nesterov07staso}. By exploiting function structures, Nesterov shows that in many applications, minimizing a well-structured nonsmooth function $f(\mathbf{x})$ can be formulated as an equivalent saddle-point form
\begin{equation}\label{eq:equi_saddle}
\min_{\mathbf{x}\in\mathcal{X}}f(\mathbf{x}) = \min_{\mathbf{x}\in\mathcal{X}}\max_{\mathbf{u}\in\mathcal{U}}\bigg[ \langle A\mathbf{x},\mathbf{u}\rangle - Q(\mathbf{u}) \bigg],
\end{equation}
where $\mathbf{u}\in\mathbb{R}^m$, $\mathcal{U}\subseteq \mathbb{R}^m$ is a convex set, $A$ is a linear operator mapping $\mathbb{R}^D\rightarrow \mathbb{R}^m$ and $Q(\mathbf{u})$ is a continuous convex function. Inserting a non-negative $\zeta$-strongly convex function $\omega(\mathbf{u})$ in (\ref{eq:equi_saddle}) one obtains a smooth approximation of the original nonsmooth function
\begin{equation}
\hat{f}(\mathbf{x},\gamma) := \max_{\mathbf{u}\in\mathcal{U}}\bigg[ \langle A\mathbf{x},\mathbf{u}\rangle - Q(\mathbf{u}) - \gamma \omega(\mathbf{u}) \bigg],
\end{equation}
where $\gamma>0$ is a fixed \emph{smoothness parameter} which is crucial in the convergence analysis. The key property of this approximation is:
\begin{lemma}\label{lm:approx_smooth}\cite{nesterov05smnsf}(Theorem 1)
Function $\hat{f}(\mathbf{x},\gamma)$ is convex and continuously differentiable, and its gradient is Lipschitz continuous with constant $L_{\hat{f}}:=\frac{\|A\|^2}{\gamma\zeta}$, where
\begin{equation}\label{eq:l12norm}
\|A\|:=\max_{\mathbf{x},\mathbf{u}}\{\langle A\mathbf{x},\mathbf{u} \rangle:\|\mathbf{x}\|=1,\|\mathbf{u}\|=1 \}.
\end{equation}
\end{lemma}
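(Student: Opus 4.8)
The plan is to make the inner maximization the source of all three properties. Write $\phi(\mathbf{x},\mathbf{u}) := \langle A\mathbf{x},\mathbf{u}\rangle - Q(\mathbf{u}) - \gamma\omega(\mathbf{u})$, so that $\hat{f}(\mathbf{x},\gamma)=\max_{\mathbf{u}\in\mathcal{U}}\phi(\mathbf{x},\mathbf{u})$. For each fixed $\mathbf{u}$ the map $\mathbf{x}\mapsto\phi(\mathbf{x},\mathbf{u})$ is affine, and a pointwise maximum of affine functions is convex, so convexity of $\hat{f}$ is immediate. The substantive work is differentiability and the Lipschitz estimate, both of which hinge on one observation: since $\omega$ is $\zeta$-strongly convex and $\gamma>0$, the inner objective $\mathbf{u}\mapsto\phi(\mathbf{x},\mathbf{u})$ is $\gamma\zeta$-strongly concave, so the maximizer
\begin{equation*}
\mathbf{u}^{*}(\mathbf{x}) := \arg\max_{\mathbf{u}\in\mathcal{U}}\phi(\mathbf{x},\mathbf{u})
\end{equation*}
exists and is unique.

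With a unique inner maximizer, I would invoke Danskin's theorem (the envelope theorem for max-functions) to conclude that $\hat{f}(\cdot,\gamma)$ is differentiable with $\nabla\hat{f}(\mathbf{x},\gamma) = \nabla_{\mathbf{x}}\phi(\mathbf{x},\mathbf{u}^{*}(\mathbf{x})) = A^{*}\mathbf{u}^{*}(\mathbf{x})$, where $A^{*}$ is the adjoint of $A$. Continuity of this gradient then follows once the Lipschitz bound below is in hand, since that bound shows $\mathbf{u}^{*}(\cdot)$ is continuous.

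The Lipschitz estimate is the main obstacle. Fix $\mathbf{x}_1,\mathbf{x}_2$ and abbreviate $\mathbf{u}_i := \mathbf{u}^{*}(\mathbf{x}_i)$, $\phi_i(\mathbf{u}) := \phi(\mathbf{x}_i,\mathbf{u})$. The idea is to convert $\gamma\zeta$-strong concavity plus the variational optimality of each $\mathbf{u}_i$ over $\mathcal{U}$ into a contraction for the maximizer map. Expanding each $\phi_i$ at its maximizer and using the first-order optimality condition (which kills the linear term because $\mathbf{u}_{3-i}\in\mathcal{U}$) gives
\begin{equation*}
\phi_1(\mathbf{u}_2)\leq\phi_1(\mathbf{u}_1)-\tfrac{\gamma\zeta}{2}\|\mathbf{u}_1-\mathbf{u}_2\|^2,\qquad \phi_2(\mathbf{u}_1)\leq\phi_2(\mathbf{u}_2)-\tfrac{\gamma\zeta}{2}\|\mathbf{u}_1-\mathbf{u}_2\|^2.
\end{equation*}
Adding these, the $Q$ and $\omega$ contributions cancel (they are identical in $\phi_1$ and $\phi_2$), leaving only the bilinear difference, so that
\begin{equation*}
\gamma\zeta\|\mathbf{u}_1-\mathbf{u}_2\|^2 \leq \langle A(\mathbf{x}_1-\mathbf{x}_2),\,\mathbf{u}_1-\mathbf{u}_2\rangle \leq \|A\|\,\|\mathbf{x}_1-\mathbf{x}_2\|\,\|\mathbf{u}_1-\mathbf{u}_2\|,
\end{equation*}
where the last step is the definition of $\|A\|$ in (\ref{eq:l12norm}). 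Hence $\|\mathbf{u}_1-\mathbf{u}_2\|\leq\frac{\|A\|}{\gamma\zeta}\|\mathbf{x}_1-\mathbf{x}_2\|$: the maximizer map is Lipschitz.

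Finally I would assemble the gradient bound. Using $\nabla\hat{f}(\mathbf{x}_i,\gamma)=A^{*}\mathbf{u}_i$ and $\|A^{*}\mathbf{z}\|\leq\|A\|\,\|\mathbf{z}\|$ (the adjoint shares the operator norm of $A$),
\begin{equation*}
\|\nabla\hat{f}(\mathbf{x}_1,\gamma)-\nabla\hat{f}(\mathbf{x}_2,\gamma)\| = \|A^{*}(\mathbf{u}_1-\mathbf{u}_2)\| \leq \|A\|\,\|\mathbf{u}_1-\mathbf{u}_2\| \leq \frac{\|A\|^2}{\gamma\zeta}\|\mathbf{x}_1-\mathbf{x}_2\|,
\end{equation*}
which is exactly $L_{\hat{f}}=\|A\|^2/(\gamma\zeta)$. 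I expect the delicate point to be justifying the two optimality inequalities when $Q$ is merely convex rather than differentiable: this requires phrasing the first-order condition as a variational inequality in terms of a supergradient of $\phi_i$ and checking the strong-concavity inequality in that generality. Everything downstream is purely algebraic.
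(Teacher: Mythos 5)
Your proposal is correct, and it is essentially the proof of the cited source: this lemma is imported by the paper from Nesterov (2005, Theorem 1) without proof, and Nesterov's own argument is exactly your route --- uniqueness of the inner maximizer by $\gamma\zeta$-strong concavity, the gradient formula $\nabla\hat{f}(\mathbf{x},\gamma)=A^{*}\mathbf{u}^{*}(\mathbf{x})$ via a Danskin-type argument, adding the two optimality inequalities so the $Q$ and $\omega$ terms cancel, and closing with the definition of $\|A\|$. The ``delicate point'' you flag is handled precisely as you suggest (first-order optimality as a variational inequality with a supergradient, valid since $\mathcal{U}$ is compact and convex), so it is not a gap.
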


Nesterov's smoothing method was originally proposed for deterministic optimization. A major drawback of this method is that the number of iterations $N$ must be known beforehand, such that the algorithm can set a proper smoothness parameter $\gamma = O\big(\frac{2\|A\|}{N+1}\big)$ to ensure convergence. This makes it unsuitable for algorithms that runs forever, or whose number of iterations is not known. Following his work we propose to extend this smoothing method to stochastic optimization. Our stochastic smoothing differs from the deterministic one in the operator $A$ and smoothness parameter $\gamma$, where both will be time-varying.

We assume that the nonsmooth part $f(\mathbf{x},\boldsymbol{\xi})$ of the stochastic composite function $\Phi()$ is well structured, i.e. for a specific realization $\boldsymbol{\xi}_t$, it has an equivalent form like the max function in (\ref{eq:equi_saddle}):
\begin{equation}\label{eq:def_f}
f(\mathbf{x},\boldsymbol{\xi}_t) = \max_{\mathbf{u}\in\mathcal{U}}\bigg[ \langle A_{\boldsymbol{\xi}_t}\mathbf{x}, \mathbf{u} \rangle - Q(\mathbf{u}) \bigg],
\end{equation}
where $A_{\boldsymbol{\xi}_t}$ is a stochastic linear operator associated with $\boldsymbol{\xi}_t$. We construct a smooth approximation of this function as:
\begin{equation}\label{eq:def_hat_f}
\hat{f}(\mathbf{x},\boldsymbol{\xi}_t,\gamma_t) := \max_{\mathbf{u}\in\mathcal{U}}\bigg[ \langle A_{\boldsymbol{\xi}_t}\mathbf{x},\mathbf{u}\rangle - Q(\mathbf{u}) - \gamma_t \omega(\mathbf{u}) \bigg],
\end{equation}
where $\gamma_t$ is a time-varying smoothness parameter only associated with iteration index $t$, and is independent of $\boldsymbol{\xi}_t$. Function $\omega()$ is non-negative and $\zeta$-strongly convex. Due to Lemma \ref{lm:approx_smooth}, $\hat{f}(\mathbf{x},\boldsymbol{\xi}_t,\gamma_t)$ is $\frac{\|A_{\boldsymbol{\xi}_t}\|^2}{\gamma_t\zeta}$-Lipschitz smooth.
It follows that
\begin{lemma}\label{lm:exp_lips}
$\forall \mathbf{x},\mathbf{y},t$,
$\mathbb{E}_{\boldsymbol{\xi}}\hat{f}(\mathbf{x},\boldsymbol{\xi},\gamma_t) \leq \mathbb{E}_{\boldsymbol{\xi}}\hat{f}(\mathbf{y},\boldsymbol{\xi},\gamma_t) + \mathbb{E}_{\boldsymbol{\xi}}\langle \nabla\hat{f}(\mathbf{y},\boldsymbol{\xi},\gamma_t) , \mathbf{x}-\mathbf{y}\rangle + \frac{\mathbb{E}_{\boldsymbol{\xi}}\|A_{\boldsymbol{\xi}}\|^2}{\gamma_t\zeta}\|\mathbf{x}-\mathbf{y}\|^2$.
\end{lemma}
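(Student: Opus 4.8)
The plan is to establish the claimed expected quadratic upper bound by first proving the corresponding deterministic (per-realization) inequality for each fixed $\boldsymbol{\xi}$, and then taking the expectation $\mathbb{E}_{\boldsymbol{\xi}}$ termwise. The key observation is that Lemma~\ref{lm:approx_smooth}, applied to the approximation $\hat{f}(\cdot,\boldsymbol{\xi},\gamma_t)$ associated with a single realization $\boldsymbol{\xi}$, already certifies that this function is convex, continuously differentiable, and has a gradient that is Lipschitz continuous with constant $L_{\boldsymbol{\xi}}:=\|A_{\boldsymbol{\xi}}\|^2/(\gamma_t\zeta)$. Everything else is then a routine consequence of the standard descent lemma together with linearity of expectation.

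First I would fix an arbitrary realization $\boldsymbol{\xi}$ and index $t$, and apply the standard descent lemma to the smooth function $\hat{f}(\cdot,\boldsymbol{\xi},\gamma_t)$: since its gradient is $L_{\boldsymbol{\xi}}$-Lipschitz, integrating $\nabla\hat{f}$ along the segment from $\mathbf{y}$ to $\mathbf{x}$ and applying Cauchy--Schwarz yields
\begin{equation}
\hat{f}(\mathbf{x},\boldsymbol{\xi},\gamma_t)\leq \hat{f}(\mathbf{y},\boldsymbol{\xi},\gamma_t)+\langle \nabla\hat{f}(\mathbf{y},\boldsymbol{\xi},\gamma_t),\mathbf{x}-\mathbf{y}\rangle+\frac{L_{\boldsymbol{\xi}}}{2}\|\mathbf{x}-\mathbf{y}\|^2.
\end{equation}
The quadratic term here carries a factor $\tfrac12$, so this per-realization bound is actually sharper than what is claimed; since $L_{\boldsymbol{\xi}}/2\leq L_{\boldsymbol{\xi}}=\|A_{\boldsymbol{\xi}}\|^2/(\gamma_t\zeta)$, the looser inequality with coefficient $L_{\boldsymbol{\xi}}$ holds a fortiori, matching the form in the statement.

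Next I would take $\mathbb{E}_{\boldsymbol{\xi}}$ of both sides of the per-realization inequality. By linearity of expectation the first two terms on the right pass through unchanged, producing exactly $\mathbb{E}_{\boldsymbol{\xi}}\hat{f}(\mathbf{y},\boldsymbol{\xi},\gamma_t)$ and $\mathbb{E}_{\boldsymbol{\xi}}\langle\nabla\hat{f}(\mathbf{y},\boldsymbol{\xi},\gamma_t),\mathbf{x}-\mathbf{y}\rangle$; in particular no interchange of expectation and differentiation is required, since the gradient term already sits inside the expectation. For the quadratic term, the crucial structural point is that $\gamma_t$ is \emph{deterministic} (it depends only on the iteration index $t$ and is independent of $\boldsymbol{\xi}$, as emphasized after Eq.~\eqref{eq:def_hat_f}) and $\zeta$ is a fixed constant, so $\gamma_t\zeta$ factors out of the expectation and only $\|A_{\boldsymbol{\xi}}\|^2$ is averaged, producing the coefficient $\mathbb{E}_{\boldsymbol{\xi}}\|A_{\boldsymbol{\xi}}\|^2/(\gamma_t\zeta)$ multiplying the deterministic factor $\|\mathbf{x}-\mathbf{y}\|^2$. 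Collecting the three terms gives the claim for all $\mathbf{x},\mathbf{y},t$.

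I do not anticipate a genuine obstacle: the only points requiring care are bookkeeping rather than conceptual. The one substantive assumption is that $\mathbb{E}_{\boldsymbol{\xi}}\|A_{\boldsymbol{\xi}}\|^2<\infty$, which is needed both for the right-hand side to be finite and for the termwise passage to expectations to be valid; this is the natural second-moment condition on the stochastic operator $A_{\boldsymbol{\xi}}$ and should be recorded as a standing assumption. The only other subtlety is ensuring that $\gamma_t$ is held fixed across the realizations being averaged, which is exactly the design choice built into the definition of the stochastic smoothing.
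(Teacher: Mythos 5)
Your proposal is correct and follows essentially the same route as the paper, which states this lemma as an immediate consequence of the per-realization Lipschitz smoothness certified by Lemma~\ref{lm:approx_smooth} (constant $\|A_{\boldsymbol{\xi}}\|^2/(\gamma_t\zeta)$ for each fixed $\boldsymbol{\xi}$), followed by taking $\mathbb{E}_{\boldsymbol{\xi}}$ termwise with $\gamma_t$ and $\zeta$ deterministic. Your added observations --- that the descent lemma actually yields the sharper coefficient $L_{\boldsymbol{\xi}}/2$ so the stated bound holds a fortiori, and that $\mathbb{E}_{\boldsymbol{\xi}}\|A_{\boldsymbol{\xi}}\|^2<\infty$ is the standing assumption needed --- are correct bookkeeping refinements of the same argument.
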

We have the following observation about our composite objective $\Phi()$, which relates the reduction of the original and approximated function values.
\begin{lemma}\label{lm:agd_phi}
For any $\mathbf{x},\mathbf{x}_t,t$,
\begin{equation}
\begin{split}
\Phi(\mathbf{x}_t) - \Phi(\mathbf{x}) &\leq  \mathbb{E}_{\boldsymbol{\xi}}\left[\hat{f}(\mathbf{x}_t,\boldsymbol{\xi},\gamma_t) + g(\mathbf{x}_t,\boldsymbol{\xi}) \right] - \mathbb{E}_{\boldsymbol{\xi}}\left[\hat{f}(\mathbf{x},\boldsymbol{\xi},\gamma_t) + g(\mathbf{x},\boldsymbol{\xi}) \right] + \gamma_t D_{\mathcal{U}},
\end{split}
\end{equation}
where $D_{\mathcal{U}}:=\max_{\mathbf{u}\in\mathcal{U}}\omega(\mathbf{u})$.
\end{lemma}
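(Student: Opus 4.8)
The plan is to reduce the entire inequality to a pointwise (in $\boldsymbol{\xi}$) two-sided comparison between the original nonsmooth loss $f(\mathbf{x},\boldsymbol{\xi})$ and its smoothed surrogate $\hat{f}(\mathbf{x},\boldsymbol{\xi},\gamma_t)$, and then integrate against $P$. First I would simply unfold the left-hand side using the definition $\Phi=f+g$ together with the stochastic representations $f(\mathbf{x})=\mathbb{E}_{\boldsymbol{\xi}}f(\mathbf{x},\boldsymbol{\xi})$ and $g(\mathbf{x})=\mathbb{E}_{\boldsymbol{\xi}}g(\mathbf{x},\boldsymbol{\xi})$, so that by linearity of expectation
\begin{equation}
\Phi(\mathbf{x}_t)-\Phi(\mathbf{x})=\mathbb{E}_{\boldsymbol{\xi}}\bigl[f(\mathbf{x}_t,\boldsymbol{\xi})+g(\mathbf{x}_t,\boldsymbol{\xi})\bigr]-\mathbb{E}_{\boldsymbol{\xi}}\bigl[f(\mathbf{x},\boldsymbol{\xi})+g(\mathbf{x},\boldsymbol{\xi})\bigr].
\end{equation}
The $g$-terms already appear in the desired form, so the whole task is to replace each $f(\cdot,\boldsymbol{\xi})$ by $\hat{f}(\cdot,\boldsymbol{\xi},\gamma_t)$ while controlling the error.

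The key step is the sandwich bound $\hat{f}(\mathbf{x},\boldsymbol{\xi},\gamma_t)\leq f(\mathbf{x},\boldsymbol{\xi})\leq \hat{f}(\mathbf{x},\boldsymbol{\xi},\gamma_t)+\gamma_t D_{\mathcal{U}}$, valid for every realization $\boldsymbol{\xi}$. The left inequality is immediate: comparing the defining maxima in (\ref{eq:def_f}) and (\ref{eq:def_hat_f}), the smoothed objective differs from the original only by the subtracted term $\gamma_t\omega(\mathbf{u})$, which is non-negative since $\omega\geq 0$ and $\gamma_t>0$; subtracting a non-negative quantity inside the $\max$ can only decrease the value. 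For the right inequality I would evaluate the $\hat{f}$-objective not at its own maximizer but at the maximizer $\mathbf{u}^\star$ of the unperturbed problem, giving $\hat{f}(\mathbf{x},\boldsymbol{\xi},\gamma_t)\geq \langle A_{\boldsymbol{\xi}}\mathbf{x},\mathbf{u}^\star\rangle-Q(\mathbf{u}^\star)-\gamma_t\omega(\mathbf{u}^\star)=f(\mathbf{x},\boldsymbol{\xi})-\gamma_t\omega(\mathbf{u}^\star)$, and then bound $\omega(\mathbf{u}^\star)\leq D_{\mathcal{U}}=\max_{\mathbf{u}\in\mathcal{U}}\omega(\mathbf{u})$.

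Finally I would apply the upper bound of the sandwich to the $\mathbf{x}_t$-term and the lower bound to the $\mathbf{x}$-term. Concretely, $f(\mathbf{x}_t,\boldsymbol{\xi})\leq \hat{f}(\mathbf{x}_t,\boldsymbol{\xi},\gamma_t)+\gamma_t D_{\mathcal{U}}$ and $-f(\mathbf{x},\boldsymbol{\xi})\leq -\hat{f}(\mathbf{x},\boldsymbol{\xi},\gamma_t)$ hold for each $\boldsymbol{\xi}$; taking $\mathbb{E}_{\boldsymbol{\xi}}$ preserves these inequalities and moves the constant $\gamma_t D_{\mathcal{U}}$ outside (as $D_{\mathcal{U}}$ and $\gamma_t$ are deterministic, the latter by the stipulation that $\gamma_t$ is independent of $\boldsymbol{\xi}$). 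Collecting the surviving terms yields exactly the claimed bound. I do not anticipate a genuine obstacle here; the only point requiring care is the right-hand side of the sandwich, where one must remember to plug in the \emph{unperturbed} maximizer rather than differentiating $\hat{f}$, and to check that $D_{\mathcal{U}}$ is indeed finite (i.e.\ that $\omega$ is bounded on $\mathcal{U}$) so that the error term is meaningful.
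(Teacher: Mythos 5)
Your proposal is correct and matches the paper's own argument: both decompose $\Phi(\mathbf{x}_t)-\Phi(\mathbf{x})$ into the $f$ and $g$ expectations, bound $f(\mathbf{x}_t,\boldsymbol{\xi})\leq \hat{f}(\mathbf{x}_t,\boldsymbol{\xi},\gamma_t)+\gamma_t D_{\mathcal{U}}$ for the first point and $-f(\mathbf{x},\boldsymbol{\xi})\leq -\hat{f}(\mathbf{x},\boldsymbol{\xi},\gamma_t)$ (non-negativity of $\omega$) for the second, then take expectations. The only cosmetic difference is that you prove the upper half of the sandwich by evaluating the smoothed objective at the unperturbed maximizer $\mathbf{u}^\star$, while the paper adds and subtracts $\gamma_t\omega(\mathbf{u})$ inside the max and invokes subadditivity of the maximum, which is the same elementary fact.
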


\subsection{Accelerated Nonsmooth SGD (\textsf{ANSGD})}
We are now ready to present our algorithm \textsf{ANSGD} (Algorithm \ref{alg:ansgd}). This stochastic algorithm is obtained by applying Nesterov's optimal method to our smooth surrogate function, and thus has a similar form to that of his original deterministic method \cite{nesterov04ilco}(p.78). However, our convergence analysis is more straightforward, and does not rely on the concept of estimate sequences. Hence it is easier to identify proper series $\gamma_t, \eta_t, \alpha_t$ and $\theta_t$ that are crucial in achieving fast rates of convergence. These series will be determined in our main results (Thm.\ref{thm:result_general_convex} and \ref{thm:result_strongly_convex}).

\begin{algorithm}
\caption{\textsf{Accelerated Nonsmooth Stochastic Gradient Descent (ANSGD)}}
\label{alg:ansgd}
\begin{algorithmic}
\STATE INPUT: series $\gamma_t,\ \eta_t,\ \theta_t\geq 0$ and $0\leq \alpha_t\leq 1$;
\STATE OUTPUT: $\mathbf{x}_{t+1}$;
\STATE[0.] Initialize $\mathbf{x}_0$ and $\mathbf{v}_0$;
\FOR{$t=0,1,2,\ldots$}
  \STATE[1.] $\mathbf{y}_{t} \leftarrow \frac{(1-\alpha_t)(\mu+\theta_t)\mathbf{x}_t+\alpha_t\theta_t\mathbf{v}_t}{\mu(1-\alpha_t)+\theta_t}$
  \STATE[2.] $\hat{f}_{t+1}(\mathbf{x})\leftarrow \displaystyle\max_{\mathbf{u}\in\mathcal{U}} \bigg[ \langle A_{\boldsymbol{\xi}_{t+1}}\mathbf{x}, \mathbf{u}\rangle - Q(\mathbf{u}) - \gamma_{t+1} \omega(\mathbf{u})\bigg]$
  \STATE[3.] $\mathbf{x}_{t+1} \leftarrow \mathbf{y}_t - \eta_t \bigg[\nabla\hat{f}_{t+1}(\mathbf{y}_t)+ \nabla g_{t+1}(\mathbf{y}_t)\bigg]$
  \STATE[4.] $\mathbf{v}_{t+1} \leftarrow \frac{\theta_t \mathbf{v}_t + \mu\mathbf{y}_t - \left[\nabla\hat{f}_{t+1}(\mathbf{y}_t)+ \nabla g_{t+1}(\mathbf{y}_t)\right]}{\mu+\theta_t}$
\ENDFOR
\end{algorithmic}
\end{algorithm}
\section{Convergence Analysis}\label{sec:analysis}
To clarify our presentation, we use Table \ref{tab:notation} to list some notations that will be used throughout the paper.
\begin{table}[h!]\caption{Some notations.\label{tab:notation}}
\begin {center}
\setlength{\tabcolsep}{5pt} \begin{tabular}{c|c}
    \hline
    Symbol & Meaning\\
    \hline \hline
    $\hat{f}_t(\mathbf{x})$, $g_t(\mathbf{x})$ & $\hat{f}(\mathbf{x},\boldsymbol{\xi}_t,\gamma_t)$, $g(\mathbf{x},\boldsymbol{\xi}_t)$\\
    $\nabla\hat{f}_t(\mathbf{x})$, $\nabla g_t(\mathbf{x})$ & $\nabla\hat{f}(\mathbf{x},\boldsymbol{\xi}_t,\gamma_t)$, $\nabla g(\mathbf{x},\boldsymbol{\xi}_t)$\\
    $L_t$ & $L_g+\frac{\|A_{\boldsymbol{\xi}_t}\|^2}{\gamma_{t}\zeta}$ \\
    $\sigma_t(\mathbf{x})$ & $[\nabla \hat{f}_t(\mathbf{x})+\nabla g_t(\mathbf{x})] - \mathbb{E}_{\boldsymbol{\xi}_t}[\nabla \hat{f}_t(\mathbf{x})+\nabla g_t(\mathbf{x})]$ \\
    $\sigma^2$ & $\mathbb{E}\max_t\|\sigma_{t+1}(\mathbf{y}_t)\|^2$ \\
    $\Delta_t$ & $\mathbb{E}_{\boldsymbol{\xi}_t}\big[\hat{f}_t(\mathbf{x}_{t})+g_t(\mathbf{x}_{t})\big] - \mathbb{E}_{\boldsymbol{\xi}_t}\big[ \hat{f}_t(\mathbf{x})+g(\mathbf{x})\big]$ \\
    $\Gamma_{t+1}$ & $\langle \sigma_{t+1}(\mathbf{y}_t), \alpha_t\mathbf{x}+(1-\alpha_t)\mathbf{x}_t - \mathbf{y}_t\rangle$\\
    $D_t^2$ & $\frac{1}{2}\mathbb{E}\|\mathbf{x}-\mathbf{v}_t\|^2$ \\
    \hline
\end{tabular}
\end{center}
\end{table}

Our convergence rates are based on the following main lemma, which bounds the progressive reduction $\Delta_t$ of the smoothed function value. Actually Line 1, 3, and 4 of Alg.\ref{alg:ansgd} are also derived from the proof of this lemma.
\begin{lemma}\label{lm:agd_lm1}
Let $\gamma_t$ be monotonically decreasing. Applying algorithm \textsf{ANSGD} to nonsmooth composite function $\Phi()$, we have $\forall \mathbf{x}$ and $\forall t\geq 0$,
\begin{equation}\label{eq:main_lm_result}
\begin{split}
&\Delta_{t+1} \leq (1-\alpha_t)\Delta_t +(1-\alpha_t)(\gamma_t-\gamma_{t+1})D_{\mathcal{U}} + \\
& \Gamma_{t+1}+ \frac{\alpha_t}{2}\bigg[ \theta_t\|\mathbf{x}-\mathbf{v}_t\|^2-(\mu+\theta_t)\|\mathbf{x}-\mathbf{v}_{t+1}\|^2 \bigg]+\\
&\eta_t p q+
\left[\frac{\alpha_t}{2(\mu+\theta_t)}+\frac{L_{t+1}}{2}\eta_t^2-\eta_t\right]q^2
\end{split}
\end{equation}
where $p:=\|\sigma_{t+1}(\mathbf{y}_t)\|$ and $q:=\|\nabla \hat{f}_{t+1}(\mathbf{y}_t)+\nabla g_{t+1}(\mathbf{y}_t)\|$.
\end{lemma}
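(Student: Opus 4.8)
The plan is to read Algorithm \ref{alg:ansgd} as one step of an accelerated gradient method applied to the \emph{expected} smoothed surrogate, and to track the sampled gradient through its conditional mean plus a zero-mean error. Write $G_{t+1}:=\nabla\hat{f}_{t+1}(\mathbf{y}_t)+\nabla g_{t+1}(\mathbf{y}_t)$ (so $q=\|G_{t+1}\|$), let $\bar{G}_{t+1}:=\mathbb{E}_{\boldsymbol{\xi}_{t+1}}G_{t+1}$ be its conditional mean given $\boldsymbol{\xi}_1,\dots,\boldsymbol{\xi}_t$, so that $\sigma_{t+1}(\mathbf{y}_t)=G_{t+1}-\bar{G}_{t+1}$ and $p=\|\sigma_{t+1}(\mathbf{y}_t)\|$. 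Here $\mathbf{y}_t,\mathbf{x}_t,\mathbf{v}_t,\mathbf{x}$ are fixed given the past, while $\mathbf{x}_{t+1},\mathbf{v}_{t+1},G_{t+1},\sigma_{t+1}(\mathbf{y}_t)$ depend on $\boldsymbol{\xi}_{t+1}$. First I would establish a descent inequality: the expected surrogate $\mathbb{E}_{\boldsymbol{\xi}}[\hat{f}(\cdot,\boldsymbol{\xi},\gamma_{t+1})]+g(\cdot)$ is $L_{t+1}$-smooth by Lemma \ref{lm:exp_lips} together with the $L_g$-smoothness of $g$, so evaluating smoothness along the step $\mathbf{x}_{t+1}=\mathbf{y}_t-\eta_t G_{t+1}$ of Line 3 and writing the linear term as $-\eta_t\langle\bar{G}_{t+1},G_{t+1}\rangle=-\eta_t q^2+\eta_t\langle\sigma_{t+1}(\mathbf{y}_t),G_{t+1}\rangle$ produces, after Cauchy--Schwarz, the deterministic part $(\tfrac{L_{t+1}}{2}\eta_t^2-\eta_t)q^2$ and the stochastic residual $\eta_t pq$.

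Next I would introduce the averaging point through convexity. Because only the expectation $g$ is assumed $\mu$-strongly convex, the first-order inequality must be applied to the expected surrogate with its true gradient $\bar{G}_{t+1}$; combining the (strong) convexity bounds at $\mathbf{y}_t$ evaluated at $\mathbf{x}_t$ and $\mathbf{x}$ with weights $1-\alpha_t$ and $\alpha_t$ upper-bounds the value at $\mathbf{y}_t$ by the corresponding convex combination of surrogate values, minus $\langle\bar{G}_{t+1},\alpha_t\mathbf{x}+(1-\alpha_t)\mathbf{x}_t-\mathbf{y}_t\rangle$ and the penalty $\tfrac{\mu}{2}[(1-\alpha_t)\|\mathbf{x}_t-\mathbf{y}_t\|^2+\alpha_t\|\mathbf{x}-\mathbf{y}_t\|^2]$. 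Substituting $\bar{G}_{t+1}=G_{t+1}-\sigma_{t+1}(\mathbf{y}_t)$ peels off precisely the term $\Gamma_{t+1}$ and leaves the inner product $-\langle G_{t+1},\alpha_t\mathbf{x}+(1-\alpha_t)\mathbf{x}_t-\mathbf{y}_t\rangle$ to be absorbed by the $\mathbf{v}$-update.

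The crux is the proximal three-point step. Line 4 is the optimality condition of the $(\mu+\theta_t)$-strongly convex update $\mathbf{v}_{t+1}=\arg\min_{\mathbf{v}}\{\langle G_{t+1},\mathbf{v}\rangle+\tfrac{\theta_t}{2}\|\mathbf{v}-\mathbf{v}_t\|^2+\tfrac{\mu}{2}\|\mathbf{v}-\mathbf{y}_t\|^2\}$, which yields
\[
\langle G_{t+1},\mathbf{v}_{t+1}-\mathbf{x}\rangle\le \tfrac{\theta_t}{2}\|\mathbf{x}-\mathbf{v}_t\|^2-\tfrac{\mu+\theta_t}{2}\|\mathbf{x}-\mathbf{v}_{t+1}\|^2+\tfrac{\mu}{2}\|\mathbf{x}-\mathbf{y}_t\|^2-\tfrac{\theta_t}{2}\|\mathbf{v}_{t+1}-\mathbf{v}_t\|^2-\tfrac{\mu}{2}\|\mathbf{v}_{t+1}-\mathbf{y}_t\|^2 .
\]
I would split $\alpha_t\mathbf{x}+(1-\alpha_t)\mathbf{x}_t-\mathbf{y}_t=\alpha_t(\mathbf{x}-\mathbf{v}_{t+1})+\big[(1-\alpha_t)\mathbf{x}_t+\alpha_t\mathbf{v}_{t+1}-\mathbf{y}_t\big]$ and verify, using Lines 1 and 4, that the bracket equals $-\tfrac{\alpha_t}{\mu+\theta_t}G_{t+1}$ --- this is exactly what the coefficients in Line 1 are engineered to produce --- which contributes $+\tfrac{\alpha_t}{\mu+\theta_t}q^2$. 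Applying the displayed inequality with weight $\alpha_t$ then supplies the telescoping bracket $\tfrac{\alpha_t}{2}[\theta_t\|\mathbf{x}-\mathbf{v}_t\|^2-(\mu+\theta_t)\|\mathbf{x}-\mathbf{v}_{t+1}\|^2]$, cancels the surviving $\tfrac{\alpha_t\mu}{2}\|\mathbf{x}-\mathbf{y}_t\|^2$ from strong convexity (the remaining $-\tfrac{(1-\alpha_t)\mu}{2}\|\mathbf{x}_t-\mathbf{y}_t\|^2$ is dropped as nonpositive), and --- after lower-bounding $\theta_t\|\mathbf{v}_{t+1}-\mathbf{v}_t\|^2+\mu\|\mathbf{v}_{t+1}-\mathbf{y}_t\|^2\ge q^2/(\mu+\theta_t)$ via $G_{t+1}=\theta_t(\mathbf{v}_t-\mathbf{v}_{t+1})+\mu(\mathbf{y}_t-\mathbf{v}_{t+1})$ and convexity of $\|\cdot\|^2$ --- converts $+\tfrac{\alpha_t}{\mu+\theta_t}q^2$ into the stated $+\tfrac{\alpha_t}{2(\mu+\theta_t)}q^2$.

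Finally I would assemble the pieces and pass from the $\gamma_{t+1}$-surrogate to the $\gamma_t$-surrogate defining $\Delta_t$: since $\gamma_t$ is decreasing and $0\le\omega\le D_{\mathcal{U}}$ on $\mathcal{U}$, shrinking the smoothness parameter inflates the surrogate by at most $(\gamma_t-\gamma_{t+1})D_{\mathcal{U}}$ (the mechanism already behind Lemma \ref{lm:agd_phi}), which yields the $(1-\alpha_t)(\gamma_t-\gamma_{t+1})D_{\mathcal{U}}$ term, while the conditional expectation $\mathbb{E}_{\boldsymbol{\xi}_{t+1}}$ collects the surrogate values into $\Delta_{t+1}$ and $(1-\alpha_t)\Delta_t$. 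I expect the main obstacle to be exactly this bookkeeping of expectations: the coupling between the fresh sample $\boldsymbol{\xi}_{t+1}$ and the iterates forces strong convexity to be invoked on the expected $g$ while the updates consume the sampled gradient $G_{t+1}$, and one must check that this mismatch enters only through the conditionally mean-zero term $\Gamma_{t+1}$ and the nonnegative cross term $\eta_t pq$, with every $\mu$- and $\mathbf{v}$-dependent quantity cancelling or telescoping to the precise constants displayed.
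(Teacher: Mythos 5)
Your proposal is correct and follows essentially the same route as the paper's proof: a descent (smoothness) inequality for the expected surrogate $F(\cdot,\gamma_{t+1})$ along Line 3, convexity/strong convexity to introduce $\mathbf{x}_t$ and $\mathbf{x}$, the substitution $\nabla F = \nabla F_{t+1}-\sigma_{t+1}$ to peel off $\Gamma_{t+1}$, the three-point optimality inequality for the $\mathbf{v}$-update (the paper cites Lan's lemma where you derive it directly), and Lemma \ref{lm:gamma}'s $\gamma$-monotonicity bound for the $(1-\alpha_t)(\gamma_t-\gamma_{t+1})D_{\mathcal{U}}$ term. Your identity $(1-\alpha_t)\mathbf{x}_t+\alpha_t\mathbf{v}_{t+1}-\mathbf{y}_t=-\tfrac{\alpha_t}{\mu+\theta_t}G_{t+1}$ is exactly the paper's step of choosing Line 1 so that the leftover inner product vanishes, combined with Line 4, so the two arguments coincide term by term.
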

\subsection{How to Choose Stepsizes $\eta_t$}
In the RHS of (\ref{eq:main_lm_result}), nonnegative scalars $p,q\geq 0$ are data-dependent, and could be arbitrarily large. Hence we need to set proper stepsizes $\eta_t$ such that the last two terms in (\ref{eq:main_lm_result}) are non-positive. One might conjecture that: there exist a series $c_t\geq 0$ such that
\begin{equation}\label{eq:lm:select_eta}
\eta_t p q+
\left[\frac{\alpha_t}{2(\mu+\theta_t)}+\frac{L_{t+1}}{2}\eta_t^2-\eta_t\right]q^2 \leq c_t p^2.
\end{equation}
It is easy to verify that if we take $\eta_t = \frac{\alpha_t}{\mu+\theta_t}$
and any series $c_t \geq \frac{\alpha_t}{2(\mu+\theta_t-\alpha_t L_{t+1})}\geq 0$,
then (\ref{eq:lm:select_eta}) is satisfied. To retain a tight bound, we take
\begin{equation}\label{eq:ct}
c_t = \frac{\alpha_t}{2(\mu+\theta_t-\alpha_t L_{t+1})}.
\end{equation}
Taking expectation on both sides of (\ref{eq:main_lm_result}) and noticing that $\mathbb{E}_{\boldsymbol{\xi_{t+1}|\xi_{[t]}}}\Gamma_{t+1}=0$, $\mathbb{E}_{\boldsymbol{\xi}_{t+1}}c_t \leq \frac{\alpha_t}{2(\mu+\theta_t-\alpha_t\mathbb{E}_{\boldsymbol{\xi}_{t+1}}L_{t+1})}$ due to Jensen's inequality, we have
\begin{lemma}
$\forall \mathbf{x}$ and $\forall t\geq 0$,
\begin{equation}\label{eq:result_base}
\begin{split}
&\mathbb{E}\Delta_{t+1} \leq (1-\alpha_t)\mathbb{E}\Delta_t + \alpha_t\theta_t D_t^2-\alpha_t(\mu+\theta_t) D_{t+1}^2\\
& + \frac{\alpha_t}{2(\mu+\theta_t-\alpha_t\mathbb{E}L_{t+1})}\sigma^2 + (1-\alpha_t)(\gamma_t-\gamma_{t+1})D_{\mathcal{U}},
\end{split}
\end{equation}
\end{lemma}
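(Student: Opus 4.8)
The plan is to obtain (\ref{eq:result_base}) by taking the total expectation of the pathwise bound (\ref{eq:main_lm_result}) of Lemma~\ref{lm:agd_lm1}, after first eliminating the two data-dependent terms in $p$ and $q$. Concretely, I would fix the stepsize $\eta_t=\frac{\alpha_t}{\mu+\theta_t}$ and invoke the algebraic inequality (\ref{eq:lm:select_eta}) with the tight constant $c_t$ of (\ref{eq:ct}); this upper-bounds $\eta_t pq+\big[\frac{\alpha_t}{2(\mu+\theta_t)}+\frac{L_{t+1}}{2}\eta_t^2-\eta_t\big]q^2$ by the single nonnegative term $c_t p^2=c_t\|\sigma_{t+1}(\mathbf{y}_t)\|^2$, after which the remaining pieces of (\ref{eq:main_lm_result}) are treated one at a time under $\mathbb{E}$.

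The easy groups come first. The linear term $(1-\alpha_t)\Delta_t$ and the deterministic smoothing term $(1-\alpha_t)(\gamma_t-\gamma_{t+1})D_{\mathcal{U}}$ pass through expectation to give $(1-\alpha_t)\mathbb{E}\Delta_t$ and the same $D_{\mathcal{U}}$ term verbatim (monotonicity of $\gamma_t$ keeps the coefficient nonnegative). For the cross term I would show $\mathbb{E}\,\Gamma_{t+1}=0$: conditioning on the history $\boldsymbol{\xi}_{[t]}$, the points $\mathbf{y}_t,\mathbf{x}_t$ and the fixed comparator $\mathbf{x}$ are $\boldsymbol{\xi}_{[t]}$-measurable, while $\mathbb{E}_{\boldsymbol{\xi}_{t+1}}\sigma_{t+1}(\mathbf{y}_t)=0$ by the definition of $\sigma_{t+1}$ as the deviation of the stochastic gradient from its conditional mean; hence $\mathbb{E}_{\boldsymbol{\xi}_{t+1}\mid\boldsymbol{\xi}_{[t]}}\Gamma_{t+1}=0$, and the tower property concludes. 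The proximal block $\frac{\alpha_t}{2}\big[\theta_t\|\mathbf{x}-\mathbf{v}_t\|^2-(\mu+\theta_t)\|\mathbf{x}-\mathbf{v}_{t+1}\|^2\big]$ turns into $\alpha_t\theta_t D_t^2-\alpha_t(\mu+\theta_t)D_{t+1}^2$ directly from $D_t^2=\frac12\mathbb{E}\|\mathbf{x}-\mathbf{v}_t\|^2$.

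The one delicate step — and where I expect the real work to lie — is bounding the noise contribution $\mathbb{E}\,[c_t p^2]$ by $\frac{\alpha_t}{2(\mu+\theta_t-\alpha_t\mathbb{E}L_{t+1})}\sigma^2$. Here both factors depend on the \emph{same} fresh sample $\boldsymbol{\xi}_{t+1}$ (since $L_{t+1}=L_g+\|A_{\boldsymbol{\xi}_{t+1}}\|^2/(\gamma_{t+1}\zeta)$ enters $c_t$, while $p^2=\|\sigma_{t+1}(\mathbf{y}_t)\|^2$ is the gradient noise), so they cannot be factored directly. My route would be to first dominate $p^2$ by the time-uniform envelope $\max_t\|\sigma_{t+1}(\mathbf{y}_t)\|^2$, whose expectation is exactly $\sigma^2$, thereby decoupling the two $\boldsymbol{\xi}_{t+1}$-dependent quantities, and then control $\mathbb{E}c_t$ via the convexity of $L\mapsto\frac{1}{\mu+\theta_t-\alpha_t L}$ together with Jensen's inequality, which relates $\mathbb{E}c_t$ to the displayed constant evaluated at $\mathbb{E}L_{t+1}$. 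Care is needed both with the direction of Jensen and with the joint dependence just noted, which is precisely why the envelope step must precede the Jensen step. Assembling the five contributions then produces exactly (\ref{eq:result_base}).
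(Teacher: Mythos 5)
Your proposal follows the paper's own proof essentially step for step: fix $\eta_t=\frac{\alpha_t}{\mu+\theta_t}$, absorb the $p,q$ terms of (\ref{eq:main_lm_result}) via the tight constant $c_t$ in (\ref{eq:ct}), kill $\Gamma_{t+1}$ by conditioning on $\boldsymbol{\xi}_{[t]}$ and the tower property, convert the proximal block into $\alpha_t\theta_t D_t^2-\alpha_t(\mu+\theta_t)D_{t+1}^2$, and control the noise term through the envelope definition of $\sigma^2$ together with Jensen's inequality applied to $L\mapsto\frac{\alpha_t}{2(\mu+\theta_t-\alpha_t L)}$. The only difference is that you explicitly flag as delicate the coupling between $c_t$ and $p^2$ and the direction of Jensen --- points the paper's own proof asserts in one line without further justification --- so your write-up is, if anything, a more candid presentation of the same argument.
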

The optimal convergence rates of our algorithm differs according to the fact of $\mu$ (positive or not). They are presented separately in the following two subsections, where the choices of $\gamma_t,\ \theta_t,\ \alpha_t$ will also be determined.
\subsection{Optimal Rates for Composite Minimizations when $\mu=0$}
When $\mu=0$, $g()$ is only convex and $L_g$-Lipschitz smooth, but not assumed to be strongly convex.
\begin{theorem}\label{thm:result_general_convex}
Take $\alpha_t = \frac{2}{t+2}$, $\gamma_{t+1} = \alpha_{t}$, $\theta_t = L_g\alpha_t+ \frac{\Omega}{\sqrt{\alpha_t}}+\frac{\mathbb{E}\|A_{\boldsymbol{\xi}}\|^2}{\zeta}$ and $\eta_t = \frac{\alpha_t}{\theta_t}$ in Alg.\ref{alg:ansgd}, where $\Omega$ is a constant. We have $\forall \mathbf{x}$ and $\forall t\geq 0$,
\begin{equation}\label{eq:thm1}
\mathbb{E}\left[\Phi(\mathbf{x}_{t+1}) - \Phi(\mathbf{x})\right] \leq \frac{4L_g D^2}{(t+2)^2} + \frac{2\mathbb{E}\|A_{\boldsymbol{\xi}}\|^2 D^2/\zeta + 4D_{\mathcal{U}}}{t+2} +\frac{\sqrt{2}(\Omega D^2+\sigma^2/\Omega)}{\sqrt{t+2}},
\end{equation}
where $D^2:=\max_i D_i^2$.
\end{theorem}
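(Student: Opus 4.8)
The plan is to start from the expectation recursion (\ref{eq:result_base}), specialize it to $\mu=0$, and unroll it against a carefully chosen multiplicative weight so that the proximal terms telescope. Setting $\mu=0$ collapses the bracketed quantity into $\alpha_t\theta_t(D_t^2-D_{t+1}^2)$. First I would check stepsize admissibility: with $\gamma_{t+1}=\alpha_t$ we have $\mathbb{E}L_{t+1}=L_g+\frac{\mathbb{E}\|A_{\boldsymbol{\xi}}\|^2}{\alpha_t\zeta}$, so that $\theta_t-\alpha_t\mathbb{E}L_{t+1}=\frac{\Omega}{\sqrt{\alpha_t}}>0$ by the definition of $\theta_t$. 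This simultaneously guarantees $c_t\geq 0$ (so the last two terms of Lemma \ref{lm:agd_lm1} are controlled, as in (\ref{eq:ct})) and reduces the noise coefficient $\frac{\alpha_t}{2(\theta_t-\alpha_t\mathbb{E}L_{t+1})}$ to the clean form $\frac{\alpha_t^{3/2}}{2\Omega}$. The target gap is recovered only at the very end from Lemma \ref{lm:agd_phi}, which gives $\mathbb{E}[\Phi(\mathbf{x}_{t+1})-\Phi(\mathbf{x})]\le\mathbb{E}\Delta_{t+1}+\gamma_{t+1}D_{\mathcal{U}}$ with $\gamma_{t+1}D_{\mathcal{U}}=\frac{2D_{\mathcal{U}}}{t+2}$.

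Next I would introduce the weights $w_t:=\frac{t(t+1)}{2}$, designed so that $w_{t+1}(1-\alpha_t)=w_t$ for all $t\ge 0$ under $\alpha_t=\frac{2}{t+2}$; note $w_0=0$ and, crucially, $\alpha_0=1$, so the $\mathbb{E}\Delta_0$ contribution is annihilated and no initialization term survives. Multiplying the specialized (\ref{eq:result_base}) by $w_{t+1}$ and summing over $t=0,\dots,T$ telescopes the left side to $w_{T+1}\mathbb{E}\Delta_{T+1}$ and leaves three sums: a proximal sum $\sum_t w_{t+1}\alpha_t\theta_t(D_t^2-D_{t+1}^2)$, a noise sum $\sum_t w_{t+1}\frac{\alpha_t^{3/2}}{2\Omega}\sigma^2$, and a smoothing-bias sum $\sum_t w_t(\gamma_t-\gamma_{t+1})D_{\mathcal{U}}$. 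The identities $w_{t+1}\alpha_t=t+1$ and $w_{t+1}\alpha_t^{3/2}=(t+1)\sqrt{\alpha_t}$ make each summand explicit.

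The main obstacle is the proximal sum, since $D_t^2-D_{t+1}^2$ has no definite sign. I would resolve it by summation by parts: writing $b_t:=(t+1)\theta_t$, one gets $\sum_{t=0}^{T}b_t(D_t^2-D_{t+1}^2)=b_0D_0^2+\sum_{t=1}^{T}(b_t-b_{t-1})D_t^2-b_TD_{T+1}^2$. The last term is dropped, and each $D_t^2$ is bounded by $D^2:=\max_i D_i^2$ \emph{provided} the increments $b_t-b_{t-1}$ are nonnegative. Monotonicity of $b_t$ is not evident because $\theta_t$ mixes a decreasing piece $L_g\alpha_t$ with an increasing piece $\Omega/\sqrt{\alpha_t}$; the fix is to split $\theta_t$ into its three constituents and note that $(t+1)L_g\alpha_t=\frac{2L_g(t+1)}{t+2}$, $(t+1)\Omega/\sqrt{\alpha_t}=\Omega(t+1)\sqrt{(t+2)/2}$, and $(t+1)\mathbb{E}\|A_{\boldsymbol{\xi}}\|^2/\zeta$ are each individually nondecreasing, so the estimate applies termwise and yields $\sum_t w_{t+1}\alpha_t\theta_t(D_t^2-D_{t+1}^2)\le (T+1)\theta_T D^2$.

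The remaining two sums are routine. For the noise sum I would use $\frac{t+1}{\sqrt{t+2}}\le\sqrt{t+2}\le\sqrt{T+2}$, giving $\sum_{t=0}^{T}(t+1)\sqrt{\alpha_t}\le\sqrt{2}\,(T+1)\sqrt{T+2}$; for the bias sum, $\gamma_t-\gamma_{t+1}=\frac{2}{t+1}-\frac{2}{t+2}=\frac{2}{(t+1)(t+2)}$ gives $w_t(\gamma_t-\gamma_{t+1})=\frac{t}{t+2}\le 1$, so the sum is at most $(T+1)D_{\mathcal{U}}$ (the $t=0$ summand vanishing because $w_0=0$, which conveniently sidesteps the undefined $\gamma_0$). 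Dividing the assembled inequality by $w_{T+1}=\frac{(T+1)(T+2)}{2}$ and substituting the explicit $\theta_T$ produces exactly $\frac{4L_gD^2}{(T+2)^2}$, $\frac{\sqrt{2}\,\Omega D^2}{\sqrt{T+2}}$, $\frac{2\mathbb{E}\|A_{\boldsymbol{\xi}}\|^2D^2/\zeta}{T+2}$, and $\frac{\sqrt{2}\,\sigma^2/\Omega}{\sqrt{T+2}}$, together with a residual $\frac{2D_{\mathcal{U}}}{T+2}$ from the bias sum; finally adding $\gamma_{T+1}D_{\mathcal{U}}=\frac{2D_{\mathcal{U}}}{T+2}$ from Lemma \ref{lm:agd_phi} to merge the two $D_{\mathcal{U}}$ contributions into $\frac{4D_{\mathcal{U}}}{T+2}$ and renaming $T\to t$ yields (\ref{eq:thm1}).
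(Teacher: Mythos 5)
Your proof is correct and follows essentially the same route as the paper's: both specialize (\ref{eq:result_base}) to $\mu=0$, unroll the recursion (your weights $w_{t+1}=\tfrac{(t+1)(t+2)}{2}$ are exactly the reciprocals of the products $\prod_i(1-\alpha_i)$ the paper carries along), bound the proximal terms by a summation-by-parts argument in which your increments $b_t-b_{t-1}$ coincide with the paper's $\Psi(t)$ coefficients $w_{t+1}\left[\alpha_t\theta_t-(1-\alpha_t)\alpha_{t-1}\theta_{t-1}\right]$, and finish by adding $\gamma_{t+1}D_{\mathcal{U}}$ via Lemma \ref{lm:agd_phi}. The differences are purely notational (weighted telescoping versus explicitly unrolled products and term-by-term monotonicity comparisons), and both arguments produce the identical constants in every term of (\ref{eq:thm1}).
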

In this result, the variance bound is optimal up to a constant factor \cite{agarwal12itlb}. The dominating factor is still due to the stochasticity, but not affected by the nonsmoothness of $f()$. Taking the parameter $\Omega = \sigma/D$, this last term becomes $\frac{2\sqrt{2}D \sigma}{\sqrt{t+2}}$. This bound is better than that of stochastic gradient descent or stochastic dual averaging \cite{dekel10odopmb} for minimizing $L$-Lipschitz smooth functions, whose rate is $O\left(\frac{LD_0^2}{t}+\frac{D_0^2+\sigma^2}{\sqrt{t}}\right)$; without the smooth function $g()$, our bound is of the same order as it, keeping in mind that our rate is for nonsmooth minimizations. This fact underscores the potential of using stochastic optimal methods for nonsmooth functions.

The diminishing smoothness parameter $\gamma_t=\frac{2}{t+2}$ indicates that initially a smoother approximation is preferred, such that the solution does not change wildly due to the nonsmoothness and stochasticity. Eventually the approximated function should be closer and closer to the original nonsmooth function, such that the optimality can be reached. Some concrete examples are given in Fig.\ref{fig:hinge_and_abs}.

The $\mathbb{E}\|A_{\boldsymbol{\xi}}\|^2$ in our bound is a theoretical constant. In Sec.\ref{sec:examples} we demonstrate a sampling method, and it turns out to work quite well in estimating $\mathbb{E}\|A_{\boldsymbol{\xi}}\|^2$.

\subsection{Nearly Optimal Rates for Strongly Convex Minimizations}
When $\mu> 0$, $g()$ is strongly convex, and the convergence rate of \textsf{ANSGD} can be improved to $O(1/t)$.
\begin{theorem}\label{thm:result_strongly_convex}
Take $\alpha_t = \frac{2}{t+1}$, $\gamma_{t+1} = \alpha_{t}$, $\theta_t = L_g\alpha_t+ \frac{\mu}{2\alpha_t}+\frac{\mathbb{E}\|A_{\boldsymbol{\xi}}\|^2}{\zeta} - \mu$ and $\eta_t = \frac{\alpha_t}{\mu+\theta_t}$ in Alg.\ref{alg:ansgd}. Denote
\begin{equation}\label{eq:thm2_def_c}
C:= \max\left\{\frac{4\mathbb{E}\|A_{\boldsymbol{\xi}}\|^2}{\zeta\mu} , 2\left( \frac{L_g}{\mu} \right)^{1/3} \right\}.
\end{equation}
We have $\forall \mathbf{x}$ and $\forall t\geq 0$,
\begin{equation}\label{eq:thm2}
\mathbb{E}\left[\Phi(\mathbf{x}_{t+1}) - \Phi(\mathbf{x})\right] \leq \frac{6.58 L_g \tilde{D}^2}{t(t+1)} +\mathcal{B} +\frac{4D_{\mathcal{U}}}{t+1}+\frac{\sigma^2}{\mu(t+1)},
\end{equation}
where
\begin{equation}
\mathcal{B}:=\begin{cases}
\frac{2\mathbb{E}\|A_{\boldsymbol{\xi}}\|^2 \tilde{D}^2/\zeta}{t+1} & \text{if } 0\leq t<C,\\
\frac{2(C-2)\mathbb{E}\|A_{\boldsymbol{\xi}}\|^2 \tilde{D}^2/\zeta }{t(t+1)} & \text{if } t\geq C,
\end{cases}
\end{equation}
and $\tilde{D}^2:=\max_{0\leq i \leq\min\{t,C\}} D_i^2$.
\end{theorem}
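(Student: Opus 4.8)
The plan is to turn the one-step recursion (\ref{eq:result_base}) into a global bound by multiplying by telescoping weights and summing. First I would substitute the prescribed schedules $\alpha_t=\frac{2}{t+1}$, $\gamma_{t+1}=\alpha_t$, $\theta_t=L_g\alpha_t+\frac{\mu}{2\alpha_t}+\frac{\mathbb{E}\|A_{\boldsymbol{\xi}}\|^2}{\zeta}-\mu$ and $\eta_t=\frac{\alpha_t}{\mu+\theta_t}$ into (\ref{eq:result_base}) and record three elementary identities. The key one is that the Lipschitz-corrected denominator collapses,
\[
\mu+\theta_t-\alpha_t\mathbb{E}L_{t+1}=\frac{\mu(t+1)}{4}>0,
\]
so the variance coefficient $\frac{\alpha_t}{2(\mu+\theta_t-\alpha_t\mathbb{E}L_{t+1})}$ equals $\frac{4}{\mu(t+1)^2}$ and stays finite for $\mu>0$ --- this is exactly why the strongly convex case needs no truncation. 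The other two are $\gamma_t-\gamma_{t+1}=\frac{2}{t(t+1)}$ and, for the weights $\lambda_t:=t(t+1)$, the relations $\lambda_t(1-\alpha_t)=\lambda_{t-1}$ and $\lambda_t\alpha_t=2t$ (note $\lambda_{-1}=\lambda_0=0$).

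Next I would multiply (\ref{eq:result_base}) by $\lambda_t$ and sum over $s=0,\dots,t$. Because $\lambda_s(1-\alpha_s)=\lambda_{s-1}$, the $\Delta$-terms telescope to $\lambda_t\mathbb{E}\Delta_{t+1}$, the lower endpoint vanishing since $\lambda_{-1}=0$. The delicate part is the proximal terms $\lambda_s\alpha_s[\theta_s D_s^2-(\mu+\theta_s)D_{s+1}^2]=2s\theta_s D_s^2-2s(\mu+\theta_s)D_{s+1}^2$, which do \emph{not} telescope cleanly. Collecting the coefficient of each $D_{s+1}^2$ coming from iterations $s$ and $s+1$ gives the net weight
\[
2(s+1)\theta_{s+1}-2s(\mu+\theta_s)=\frac{4L_g}{(s+1)(s+2)}+\frac{2\mathbb{E}\|A_{\boldsymbol{\xi}}\|^2}{\zeta}-(s+1)\mu .
\]
Everything hinges on the sign of this residual. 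I would show it is nonpositive once $s\ge C$: writing $C$ as a maximum of two thresholds and using $\max\{a,b\}\ge\frac{a+b}{2}$ guarantees that $(s+1)\mu$ simultaneously dominates both $\frac{4L_g}{(s+1)(s+2)}$ and $\frac{2\mathbb{E}\|A_{\boldsymbol{\xi}}\|^2}{\zeta}$ (each branch of $C$ is inflated by a factor $2$ precisely so the two sources can be absorbed at once). Hence for $s\ge C$ the $D_{s+1}^2$ terms are dropped, while for $s<C$ they are bounded by $\tilde D^2$ times the positive part; summing the $L_g$ part against $\sum_s\frac{1}{(s+1)^2}\le\frac{\pi^2}{6}$ produces the constant $6.58\approx\frac{2\pi^2}{3}$, and the $\mathbb{E}\|A_{\boldsymbol{\xi}}\|^2$ part, whose number of surviving terms is $\approx t$ when $t<C$ and $\approx C$ when $t\ge C$, produces the two-case expression $\mathcal{B}$.

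The remaining two sums are routine. The variance contribution $\sum_{s=0}^t\lambda_s\frac{4\sigma^2}{\mu(s+1)^2}=\frac{4\sigma^2}{\mu}\sum_{s=0}^t\frac{s}{s+1}$ is $O\!\big(\sigma^2(t+1)/\mu\big)$, and the smoothing-bias contribution $\sum_s\lambda_{s-1}(\gamma_s-\gamma_{s+1})D_{\mathcal{U}}=\sum_s\frac{2(s-1)}{s+1}D_{\mathcal{U}}$ is $O(tD_{\mathcal{U}})$. Dividing the whole inequality by $\lambda_t=t(t+1)$ bounds $\mathbb{E}\Delta_{t+1}$, and finally Lemma \ref{lm:agd_phi} converts this into a bound on $\mathbb{E}[\Phi(\mathbf{x}_{t+1})-\Phi(\mathbf{x})]$ by adding $\gamma_{t+1}D_{\mathcal{U}}=\frac{2D_{\mathcal{U}}}{t+1}$; together with the divided bias sum this yields the stated $\frac{4D_{\mathcal{U}}}{t+1}$. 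I expect the main obstacle to be exactly the $D^2$ bookkeeping above: unlike the deterministic excessive-gap analysis the proximal terms telescope only up to a positive residual, and one must pin down both that the residual equals $\frac{4L_g}{(s+1)(s+2)}+\frac{2\mathbb{E}\|A_{\boldsymbol{\xi}}\|^2}{\zeta}-(s+1)\mu$ and the precise index $C$ past which it becomes harmless. Getting this threshold right is what forces the two-regime definition of $\mathcal{B}$ and the appearance of $\tilde D^2=\max_{0\le i\le\min\{t,C\}}D_i^2$ rather than a single $D^2$; the variance and bias sums, by contrast, are standard.
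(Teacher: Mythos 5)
Your proposal is correct and takes essentially the same route as the paper's proof: your weighted-sum formulation with $\lambda_t=t(t+1)$ is algebraically identical to the paper's recursive unrolling (since $\prod_{i=s+1}^{t}(1-\alpha_i)=\lambda_s/\lambda_t$), your residual $\frac{4L_g}{(s+1)(s+2)}+\frac{2\mathbb{E}\|A_{\boldsymbol{\xi}}\|^2}{\zeta}-(s+1)\mu$ is exactly the paper's $S_{s+1}$ in weighted form, and the threshold $C$ (each branch inflated so its term stays below half of the $\mu$ budget), the constant $2\pi^2/3\approx 6.58$ from $\sum_i 1/i^2$, the two-regime $\mathcal{B}$, and the $4D_{\mathcal{U}}/(t+1)$ accounting all arise identically. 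The only looseness is that you leave the variance term as $O(\sigma^2/(\mu t))$ rather than deriving the stated constant, but the paper's own proof shares this slip (its intermediate bound $t\alpha_t^2\sigma^2/\mu$ actually gives $4\sigma^2/(\mu(t+1))$, not $\sigma^2/(\mu(t+1))$), so this is not a gap attributable to your argument.
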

Note that $C$ is the smallest iteration index for which one can retain $1/t^2$ rates for the $\mathbb{E}\|A_{\boldsymbol{\xi}}\|^2$ part ($\mathcal{B}$). Without any knowledge about $L_g$, $\mu$ and $\mathbb{E}\|A_{\boldsymbol{\xi}}\|^2$, one can set a parameter $\Omega$ and take $\theta_t = L_g\alpha_t+ \frac{\mu}{2\alpha_t}+\frac{\mathbb{E}\|A_{\boldsymbol{\xi}}\|^2}{\Omega\zeta} - \mu$ in the algorithm. In our experiments, we observe that one can take $\Omega$ fairly large (of $O(\mathbb{E}\|A_{\boldsymbol{\xi}}\|^2)$), meaning that $C$ can be very small (O(1)), and $\mathcal{B}$ is $O(\frac{1}{t^2})$ for \emph{all} $t$. In this sense, strongly convex \textsf{ANSGD} is almost parameter-free. Without the $O(1/t)$ rate of $D_{\mathcal{U}}$, all terms in our bound are optimal. This is why our rate is called ``nearly'' optimal. In practice, $D_{\mathcal{U}}$ is usually small, and it will be dominated by the last term $\frac{\sigma^2}{\mu(t+1)}$.

\subsection{Batch-to-Online Conversion}
The performance of an online learning (online convex minimization) algorithm is typically measured by \emph{regret}, which can be expressed as
\begin{equation}
R(t):= \sum_{i=0}^{t-1} \left[\Phi(\mathbf{x}_{i}, \boldsymbol{\xi}_{i+1}) - \Phi(\mathbf{x}_{t}^*, \boldsymbol{\xi}_{i+1})\right],
\end{equation}
where $\mathbf{x}_t^*:=\arg\min_{\mathbf{x}} \sum_{i=0}^{t-1}\left[\Phi(\mathbf{x}, \boldsymbol{\xi}_{i+1}) \right]$. In the learning theory literature, many approaches are proposed which use online learning algorithms for batch learning (stochastic optimization), called ``online-to-batch'' (O-to-B) conversions. For convex functions, many of these approaches employ an ``averaged'' solution as the final solution.

On the contrary, we show that stochastic optimization algorithms can also be used \emph{directly} for online learning. This ``batch-to-online'' (B-to-O) conversion is almost free of any additional effort: under i.i.d. assumptions of data, one can use any stochastic optimization algorithm for online learning.
\begin{proposition}\label{prop:bto}
For any $t\geq 0$, $\mathbb{E}_{\boldsymbol{\xi}_{[t]}} R(t) \leq$
\begin{equation}\label{eq:bto}
\sum_{i=0}^{t-1} \mathbb{E}_{\boldsymbol{\xi}_{[i]}}\left[\Phi(\mathbf{x}_i) - \Phi(\mathbf{x}^*) \right] + \mathbb{E}_{\boldsymbol{\xi}_{[t]}}\sum_{i=0}^{t-1}\left[\Phi(\mathbf{x}_t^*)-\Phi(\mathbf{x}_t^*,\boldsymbol{\xi}_{i+1}) \right]
\end{equation}
where $\mathbf{x}^*:= \arg\min_{\mathbf{x}}\Phi(\mathbf{x})$ and $\mathbf{x}_t^*:=\arg\min_{\mathbf{x}} \sum_{i=0}^{t-1}\left[\Phi(\mathbf{x}, \boldsymbol{\xi}_{i+1}) \right]$.
\end{proposition}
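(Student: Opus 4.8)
The plan is to prove Proposition~\ref{prop:bto} by a direct decomposition of the regret $R(t)$, adding and subtracting the deterministic optimum $\Phi(\mathbf{x}^*)$ and the empirical optimizer value $\Phi(\mathbf{x}_t^*)$, then controlling each resulting group of terms by an expectation or an i.i.d.\ argument. The key observation driving everything is that under the i.i.d.\ assumption, the iterate $\mathbf{x}_i$ produced by the algorithm depends only on the samples $\boldsymbol{\xi}_{[i]}=(\boldsymbol{\xi}_1,\ldots,\boldsymbol{\xi}_i)$ seen so far, hence is \emph{independent} of the fresh sample $\boldsymbol{\xi}_{i+1}$ used to evaluate the instantaneous loss $\Phi(\mathbf{x}_i,\boldsymbol{\xi}_{i+1})$. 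This is what lets us replace stochastic per-round losses by their population counterparts in expectation.

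First I would write, for each round $i$, the identity
\[
\Phi(\mathbf{x}_i,\boldsymbol{\xi}_{i+1}) - \Phi(\mathbf{x}_t^*,\boldsymbol{\xi}_{i+1})
= \bigl[\Phi(\mathbf{x}_i,\boldsymbol{\xi}_{i+1}) - \Phi(\mathbf{x}^*,\boldsymbol{\xi}_{i+1})\bigr]
+ \bigl[\Phi(\mathbf{x}^*,\boldsymbol{\xi}_{i+1}) - \Phi(\mathbf{x}_t^*,\boldsymbol{\xi}_{i+1})\bigr],
\]
and sum over $i=0,\ldots,t-1$. Taking the full expectation $\mathbb{E}_{\boldsymbol{\xi}_{[t]}}$, I would handle the first bracket using the independence of $\mathbf{x}_i$ from $\boldsymbol{\xi}_{i+1}$: conditioning on $\boldsymbol{\xi}_{[i]}$ and taking the inner expectation over $\boldsymbol{\xi}_{i+1}\sim P$ turns $\Phi(\mathbf{x}_i,\boldsymbol{\xi}_{i+1})$ into $\Phi(\mathbf{x}_i)=\mathbb{E}_{\boldsymbol{\xi}}\Phi(\mathbf{x}_i,\boldsymbol{\xi})$ and $\Phi(\mathbf{x}^*,\boldsymbol{\xi}_{i+1})$ into $\Phi(\mathbf{x}^*)$, yielding exactly the first sum $\sum_{i=0}^{t-1}\mathbb{E}_{\boldsymbol{\xi}_{[i]}}[\Phi(\mathbf{x}_i)-\Phi(\mathbf{x}^*)]$ in~(\ref{eq:bto}).

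For the second bracket, the plan is to split it once more as
$\Phi(\mathbf{x}^*,\boldsymbol{\xi}_{i+1}) - \Phi(\mathbf{x}_t^*,\boldsymbol{\xi}_{i+1})
= [\Phi(\mathbf{x}^*,\boldsymbol{\xi}_{i+1}) - \Phi(\mathbf{x}_t^*)]
+ [\Phi(\mathbf{x}_t^*) - \Phi(\mathbf{x}_t^*,\boldsymbol{\xi}_{i+1})]$,
where the last piece is already the second sum appearing in~(\ref{eq:bto}). The remaining job is to argue that $\mathbb{E}_{\boldsymbol{\xi}_{[t]}}\sum_{i=0}^{t-1}[\Phi(\mathbf{x}^*,\boldsymbol{\xi}_{i+1})-\Phi(\mathbf{x}_t^*)]\le 0$. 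Since $\boldsymbol{\xi}^*$ is the \emph{population} minimizer, $\sum_{i=0}^{t-1}\Phi(\mathbf{x}^*,\boldsymbol{\xi}_{i+1})$ is the empirical loss of $\mathbf{x}^*$, while $t\,\Phi(\mathbf{x}_t^*)$ is $t$ times the population loss of the \emph{empirical} minimizer; by definition $\mathbf{x}_t^*$ minimizes the empirical sum, so $\sum_i\Phi(\mathbf{x}_t^*,\boldsymbol{\xi}_{i+1})\le\sum_i\Phi(\mathbf{x}^*,\boldsymbol{\xi}_{i+1})$, and taking expectations (using $\mathbb{E}\Phi(\mathbf{x}_t^*,\boldsymbol{\xi}_{i+1})$ relates to $\Phi(\mathbf{x}_t^*)$ via the same i.i.d.\ identity, with care taken that $\mathbf{x}_t^*$ does depend on all of $\boldsymbol{\xi}_{[t]}$) gives the desired nonpositivity.

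The main obstacle I anticipate is the measurability/independence bookkeeping in the second bracket: unlike $\mathbf{x}_i$, the empirical optimizer $\mathbf{x}_t^*$ depends on the \emph{entire} sample $\boldsymbol{\xi}_{[t]}$, including $\boldsymbol{\xi}_{i+1}$, so one cannot naively replace $\Phi(\mathbf{x}_t^*,\boldsymbol{\xi}_{i+1})$ by $\Phi(\mathbf{x}_t^*)$ under expectation. The clean way around this is \emph{not} to evaluate that term but to leave $\sum_{i=0}^{t-1}[\Phi(\mathbf{x}_t^*)-\Phi(\mathbf{x}_t^*,\boldsymbol{\xi}_{i+1})]$ intact as the second term of the bound, and to discard only the term $\sum_{i=0}^{t-1}[\Phi(\mathbf{x}^*,\boldsymbol{\xi}_{i+1})-\Phi(\mathbf{x}_t^*)]$, whose expectation is nonpositive precisely because $\mathbf{x}^*$ is feasible for the empirical problem that $\mathbf{x}_t^*$ solves optimally. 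Assembling the three pieces—the population regret sum, the nonpositive discarded term, and the retained fluctuation term—yields the claimed inequality~(\ref{eq:bto}).
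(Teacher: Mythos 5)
Your overall decomposition is the same as the paper's: split each regret term around the population optimum, handle the first bracket by conditioning on $\boldsymbol{\xi}_{[i]}$ and using the independence of $\mathbf{x}_i$ from the fresh sample $\boldsymbol{\xi}_{i+1}$, keep the fluctuation term $\sum_{i=0}^{t-1}[\Phi(\mathbf{x}_t^*)-\Phi(\mathbf{x}_t^*,\boldsymbol{\xi}_{i+1})]$ intact rather than trying to evaluate it, and discard a middle term as nonpositive. The first and last of these steps are correct and match the paper's proof exactly.

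The gap is in your justification for discarding the middle term. You need $\mathbb{E}_{\boldsymbol{\xi}_{[t]}}\sum_{i=0}^{t-1}\left[\Phi(\mathbf{x}^*,\boldsymbol{\xi}_{i+1})-\Phi(\mathbf{x}_t^*)\right]\le 0$, i.e.\ $t\,\Phi(\mathbf{x}^*)\le t\,\mathbb{E}\,\Phi(\mathbf{x}_t^*)$, and you attribute this to the empirical optimality of $\mathbf{x}_t^*$ (``$\mathbf{x}^*$ is feasible for the empirical problem that $\mathbf{x}_t^*$ solves optimally''). That inequality points the wrong way. Empirical optimality gives $\sum_i\Phi(\mathbf{x}_t^*,\boldsymbol{\xi}_{i+1})\le\sum_i\Phi(\mathbf{x}^*,\boldsymbol{\xi}_{i+1})$, hence in expectation $\mathbb{E}\sum_i\Phi(\mathbf{x}_t^*,\boldsymbol{\xi}_{i+1})\le t\,\Phi(\mathbf{x}^*)$: an \emph{upper} bound on the empirical loss of $\mathbf{x}_t^*$. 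To get from there to $t\,\mathbb{E}\,\Phi(\mathbf{x}_t^*)$ you would have to relate the empirical loss of $\mathbf{x}_t^*$ to its population loss; the relevant fact (optimism of ERM) is $\mathbb{E}\sum_i\Phi(\mathbf{x}_t^*,\boldsymbol{\xi}_{i+1})\le t\,\mathbb{E}\,\Phi(\mathbf{x}_t^*)$, which is again an upper bound on the same quantity, and two upper bounds on $\mathbb{E}\sum_i\Phi(\mathbf{x}_t^*,\boldsymbol{\xi}_{i+1})$ cannot be chained into $\Phi(\mathbf{x}^*)\le\mathbb{E}\,\Phi(\mathbf{x}_t^*)$. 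The correct (and much simpler) reason, which is what the paper uses, is the \emph{population} optimality of $\mathbf{x}^*$: since $\mathbf{x}^*=\arg\min_{\mathbf{x}}\Phi(\mathbf{x})$, the inequality $\Phi(\mathbf{x}^*)\le\Phi(\mathbf{x}_t^*)$ holds pointwise for every realization of $\boldsymbol{\xi}_{[t]}$, and taking expectations gives exactly what is needed; the empirical optimality of $\mathbf{x}_t^*$ plays no role anywhere in this proposition. With that one-line replacement your proof is complete and coincides with the paper's.
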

When $\Phi()$ is convex, the second term in (\ref{eq:bto}) can be bounded by applying standard results in uniform convergence (e.g. \cite{boucheron05tcassra}): $\sum_{i=1}^{t-1}\Phi(\mathbf{x}_t^*)-\Phi(\mathbf{x}_t^*,\boldsymbol{\xi}_{i+1}) = O(\sqrt{t})$.
Together with summing up the RHS of (\ref{eq:thm1}), we can obtain an $O(\sqrt{t})$ regret bound.
When $\Phi()$ is strongly convex, the second term in (\ref{eq:bto}) can be bounded using \cite{sss09sco}:
$\sum_{i=1}^{t-1}\Phi(\mathbf{x}_t^*)-\Phi(\mathbf{x}_t^*,\boldsymbol{\xi}_{i+1}) =O(\ln t)$. Together with summing up the RHS of (\ref{eq:thm2}), an $O(\ln t)$ regret bound is achieved. The $O(\sqrt{t})$ and $O(\ln t)$ regret bounds are known

Using our proposed \textsf{ANSGD} for online learning by B-to-O achieves the same (optimal) regret bounds as state-of-the-art algorithms designated for online learning. However, using O-to-B, one can only retain an $O(\ln t/t)$ rate of convergence for stochastic strongly convex optimization. From this perspective, O-to-B is inferior to B-to-O. The sub-optimality of O-to-B is also discussed in \cite{hazan11brmb}.

\section{Examples}\label{sec:examples}
In this section, two nonsmooth functions are given as examples. We will show how these functions can be stochastically approximated, and how to calculate parameters used in our algorithm.
\subsection{Hinge Loss SVM Classification}\label{ssec:hinge}
Hinge loss is a convex surrogate of the $0-1$ loss. Denote a sample-label pair as $\boldsymbol{\xi}:=\{\mathbf{s},l\}\sim P$, where $\mathbf{s}\in\mathbb{R}^D$ and $l\in\mathbb{R}$. Hinge loss can be expressed as $f_{\text{hinge}}(\mathbf{x}):=\max\{0,1-l\mathbf{s}^T\mathbf{x}\}$. It has been widely used for SVM classifiers where the objective is $\min\Phi(\mathbf{x})=\min \mathbb{E}_{\boldsymbol{\xi}}f_{\text{hinge}}(\mathbf{x}) + \frac{\lambda}{2} \|\mathbf{x}\|^2$. Note that the regularization term $g(\mathbf{x})=\frac{\lambda}{2} \|\mathbf{x}\|^2$ is $\lambda$-strongly convex, hence according to Thm.\ref{thm:result_strongly_convex}, \textsf{ANSGD} enjoys $O(1/(\lambda t))$ rates. Taking $\omega(\mathbf{u})=\frac{1}{2}\|\mathbf{u}\|^2$ in (\ref{eq:def_hat_f}), it is easy to check that the smooth stochastic approximation of hinge loss is
\begin{equation}
\hat f_{\text{hinge}}(\mathbf{x},\boldsymbol{\xi}_t,\gamma_t) = \max_{0\leq u\leq 1} \left\{ u\left(1-l_t\mathbf{s}_t^T\mathbf{x}\right) - \gamma_t\frac{u^2}{2} \right\}.
\end{equation}
This maximization is simple enough such that we can obtain an equivalent smooth representation:
\begin{equation}
\hat f_{\text{hinge}}(\mathbf{x},\boldsymbol{\xi}_t,\gamma_t) =
\begin{cases}
0 & \text{if } l_t\mathbf{s}_t^T\mathbf{x} \geq 1,\\
\frac{(1-l_t\mathbf{s}_t^T\mathbf{x})^2}{2\gamma_t} & \text{if } 1-\gamma_t\leq l_t\mathbf{s}_t^T\mathbf{x} < 1,\\
1-l_t\mathbf{s}_t^T\mathbf{x}-\frac{\gamma_t}{2} & \text{if } l_t\mathbf{s}_t^T\mathbf{x} < 1-\gamma_t.
\end{cases}
\end{equation}
Several examples of $\hat f_{\text{hinge}}$ with varying $\gamma_t$ are plotted in Fig.\ref{fig:hinge_and_abs}(left) in comparing with the hinge loss.
\begin{figure}[h!]
\begin{center}
\scalebox{0.65}{\includegraphics*[61pt,274pt][545pt,503pt]{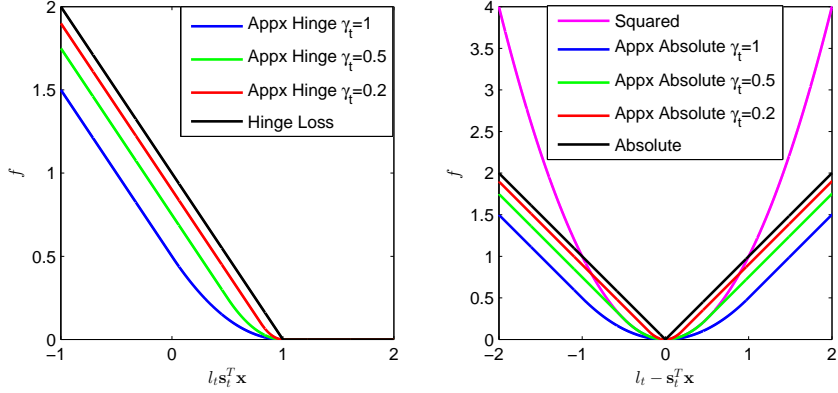}}
\caption{Left: Hinge loss and its smooth approximations. Right: Absolute loss and its smooth approximations.}
\label{fig:hinge_and_abs}
\end{center}
\end{figure}

Here $\mathbf{u}$ is a scalar, hence it is straightforward to calculate $\frac{\mathbb{E}\|A_{\boldsymbol{\xi}}\|^2}{\zeta}$, which will be used to generate sequences $\theta_t$. In binary classification, suppose $l\in\{1,-1\}$. Using definition (\ref{eq:l12norm}), one only needs to calculate $\mathbb{E}(\max_{\|\mathbf{x}\|=1} \mathbf{s}_t^T\mathbf{x})^2$. Practically one can take a small subset of $k$ random samples $\mathbf{s}_i$ (e.g. $k=100$), and calculate the sample average of the squared norms $\frac{1}{k}\sum_{i=1}^k \|\mathbf{s}_i\|^2$. This yields $\frac{1}{k}\sum_{i=1}^k (\max_{\|\mathbf{x}\|=1} \mathbf{s}_i^T\mathbf{x} )^2$, an estimate of $\mathbb{E}\|A_{\boldsymbol{\xi}}\|^2$.

\subsection{Absolute Loss Robust Regression}
Absolute loss is an alternative to the popular squared loss for robust regressions \cite{hastie09esl}. Using same notations as Sec.\ref{ssec:hinge} it can be expressed as $f_{abs}(\mathbf{x}):=|l-\mathbf{s}^T\mathbf{x}|$. Taking $\omega(\mathbf{u})=\frac{1}{2}\|\mathbf{u}\|^2$ in (\ref{eq:def_hat_f}), its smooth stochastic approximation can be expressed as
\begin{equation}
\hat f_{\text{abs}}(\mathbf{x},\boldsymbol{\xi}_t,\gamma_t) = \max_{-1\leq u\leq 1} \left\{ u(l_t-\mathbf{s}_t^T\mathbf{x}) - \gamma_t\frac{u^2}{2}\right\}.
\end{equation}
Solving this maximization wrt $u$ we obtain an equivalent form:
\begin{equation}
\hat f_{\text{abs}}(\mathbf{x},\boldsymbol{\xi}_t,\gamma_t) =
\begin{cases}
l_t-\mathbf{s}_t^T\mathbf{x} - \frac{\gamma_t}{2} & \text{if } l_t-\mathbf{s}_t^T\mathbf{x} \geq \gamma_t,\\
\frac{(l_t-\mathbf{s}_t^T\mathbf{x})^2}{2\gamma_t} & \text{if } -\gamma_t\leq l_t-\mathbf{s}_t^T\mathbf{x}< \gamma_t,\\
-(l_t-\mathbf{s}_t^T\mathbf{x})- \frac{\gamma_t}{2}  & \text{if } l_t-\mathbf{s}_t^T\mathbf{x} < -\gamma_t.
\end{cases}
\end{equation}
This approximation looks similar to the well-studied Huber loss \cite{huber64relp}, though they are different. Actually they share the same form only when $\gamma_t=0.5$ (green curve in Fig.\ref{fig:hinge_and_abs} Right).

The parameter $\mathbb{E}\|A_{\boldsymbol{\xi}}\|^2$ can be estimated in a similar way as discussed in Sec.\ref{ssec:hinge}.


\section{Experimental Results}\label{sec:exp}
In this section, five publicly available datasets from various application domains will be used to evaluate the efficiency of \textsf{ANSGD}.  Datasets ``svmguide1'', ``real-sim'', ``rcv1'' and ``alpha'' are for binary classifications, and ``abalone'' is for robust regressions.\footnote{Dataset ``alpha'' is obtained from \url{ftp://largescale.ml.tu-berlin.de/largescale/}, and the other four datasets can be accessed via \url{http://www.csie.ntu.edu.tw/~cjlin/libsvmtools}. Dataset ``rcv1'' comes with $20,242$ training samples and $677,399$ testing samples. For ``svmguide1'' and ``real-sim'', we randomly take $60\%$ of the samples for training and $40\%$ for testing. For ``alpha'' and ``abalone'', $80\%$ are used for training, and the rest $20\%$ are used for testing.}

Following our examples in Sec.\ref{sec:examples}, we will evaluate our algorithm using approximated hinge loss for classifications, and approximated absolute loss for regressions. Exact hinge and absolute losses will be used for subgradient descent algorithms that we will compare with, as described in the following section. All losses are squared-$l2$-norm-regularized. The regularization parameter $\lambda$ is shown on each figure. When assuming strong-convexity, we take $\mu=\lambda$.
\subsection{Algorithms for Comparison and Parameters}\label{subsec:alg_cmp}
We compare \textsf{ANSGD} with three state-of-the-art algorithms. Each algorithm has a data-dependent tuning parameter, denoted by $\Omega$ (although they have different physical meanings). The best values of $\Omega$ are found based on a tuning subset of samples. Note that when assuming strong-convexity, our \textsf{ANSGD} is almost parameter-free. As discussed after Thm.\ref{thm:result_strongly_convex}, our experiments indicate that the optimal $\Omega$ is taken such that $\frac{\mathbb{E}\|A_{\boldsymbol{\xi}}\|^2}{\Omega\zeta} \approx 1$, meaning that one can simply take $\theta_t=L_g\alpha_t+\frac{\mu}{2\alpha_t}+1-\mu$.

\textsf{SGD}. The classic stochastic approximation \cite{robbins51sam} is adopted: $\mathbf{x}_{t+1}\leftarrow \mathbf{x}_t -\eta_t f^{\prime}(\mathbf{x}_t)$, where $f^{\prime}(\mathbf{x}_t)$ is the subgradient. When only assuming convexity ($\mu=0$), we use stepsize $\eta_t=\frac{\Omega}{\sqrt{t}}$. When assuming strong-convexity, we follow the stepsize used in SGD2 \cite{bottousgd2}: $\eta_t=\frac{1}{\mu(t+\Omega)}$.

\textsf{Averaged SGD}. This is algorithmically the same as \textsf{SGD}, except that the averaged result $\bar{\mathbf{x}}:=\frac{1}{t}\sum_{i=1}^t\mathbf{x}_i$ is used for testing. We follow the stepsizes suggested by the recent work on the non-asymptotic analysis of SGD \cite{bach11naasaaml,xu11tooplslasgd}, where it is argued that Polyak's averaging combining with proper stepsizes yield optimal rates. When only assuming convexity, we use stepsizes $\eta_t = \frac{\Omega}{\sqrt{t}}$ \cite{bach11naasaaml}. When assuming strong convexity, the stepsize is taken as $\eta_t = \frac{1}{\Omega (1+\mu t/\Omega)^{3/4}} $\cite{xu11tooplslasgd}.

\textsf{AC-SA}. This approach \cite{lan10omsco,lan11osaascsco1} is interesting to compare because like \textsf{ANSGD}, it is another way of obtaining a stochastic algorithm based on Nesterov's optimal method, begging the question of whether it has similar behavior. Theoretically, according to Prop.8 and 9 in \cite{lan11osaascsco1}, the bound for the nonsmooth part is of $O(1/\sqrt{t})$ for $\mu=0$ and $O(1/t)$ for $\mu>0$. In comparison, our nonsmooth part converges in $O(1/t)$ for $\mu=0$ and $O(1/t^2)$ for $\mu>0$. Numerically we observe that directly applying \textsf{AC-SA} to nonsmooth functions results in inferior performances.

\subsection{Results}
Due to the stochasticity of all the algorithms, for each setting of the experiments, we run the program for $10$ times, and plot the mean and standard deviation of the results using error bars.

In the first set of experiments, we compare \textsf{ANSGD} with two subgradient-based algorithms \textsf{SGD} and \textsf{Averaged SGD}. Classification results are shown in Fig.\ref{fig:svmguide1_vs_SGD}, \ref{fig:realsim_vs_SGD}, \ref{fig:rcv1_vs_SGD} and \ref{fig:alpha_vs_SGD}, and regression results are shown in Fig.\ref{fig:abalone_loss_vs_SGD}. In each figure, the left column is for algorithms without strongly convex assumptions, while in the right column the algorithms assume strong-convexity and take $\mu=\lambda$. For classification results, we plot function values over the testing set in the first row, and plot testing accuracies in the second row.
\begin{figure}[h!]
\begin{center}
\scalebox{0.7}{\includegraphics*[130pt,248pt][484pt,549pt]{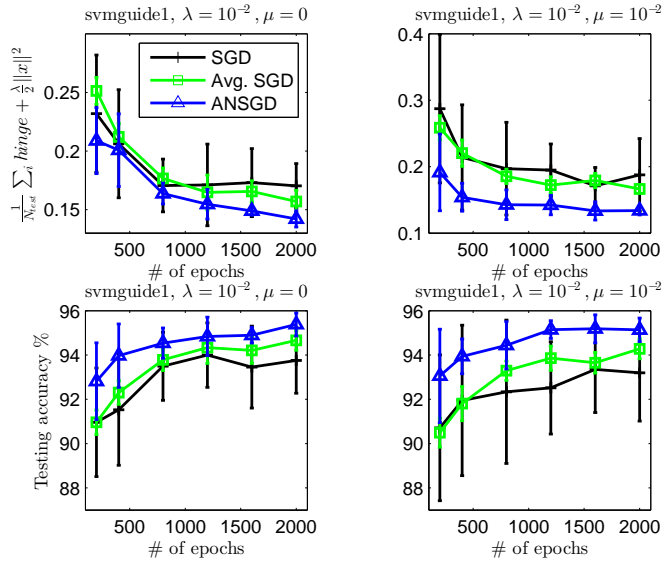}}
\caption{Classification with ``svmguide1''.}
\label{fig:svmguide1_vs_SGD}
\end{center}
\end{figure}
\begin{figure}[h!]
\begin{center}
\scalebox{0.7}{\includegraphics*[130pt,248pt][484pt,549pt]{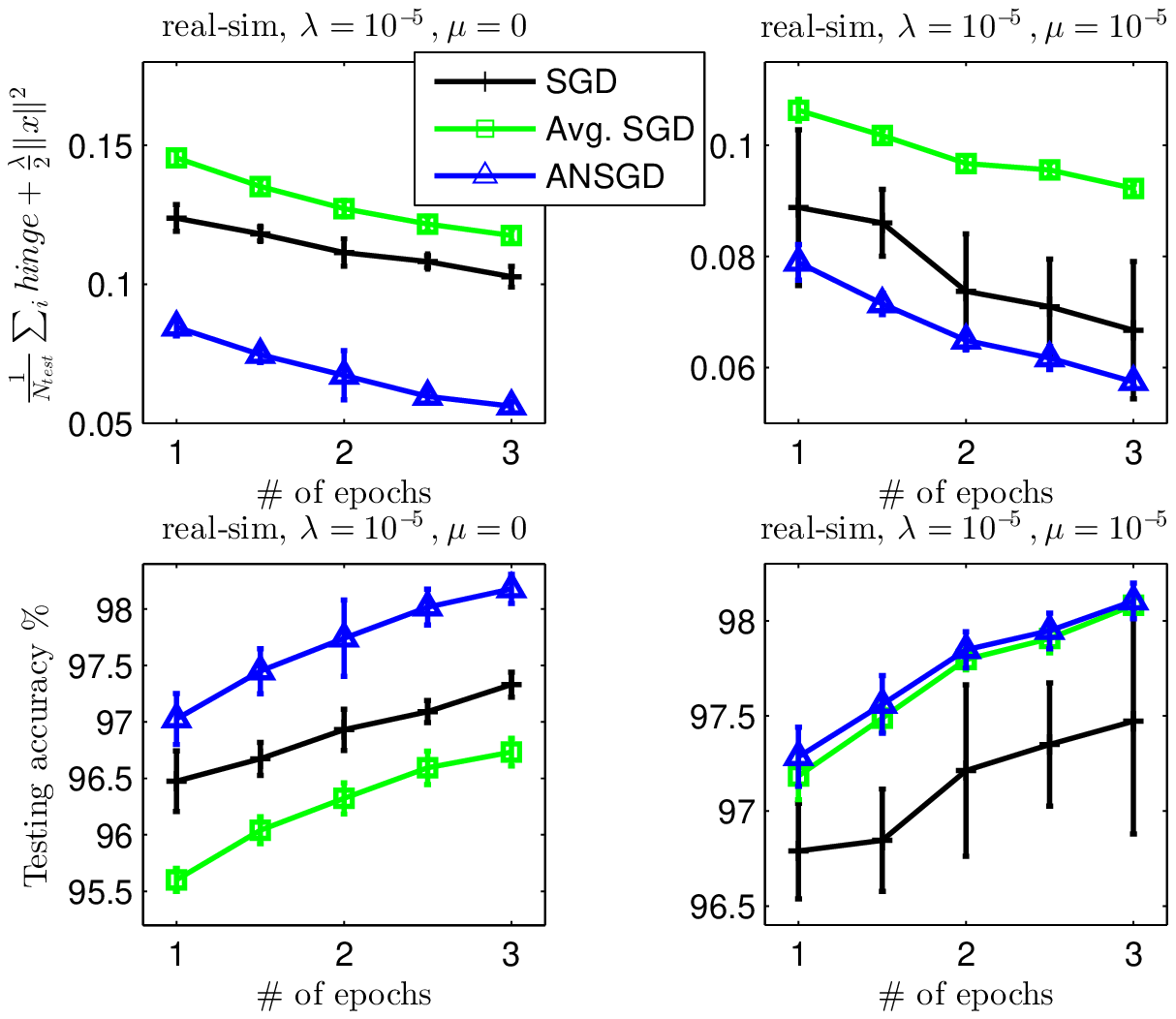}}
\caption{Classification with ``real-sim''.}
\label{fig:realsim_vs_SGD}
\end{center}
\end{figure}
\begin{figure}[h!]
\begin{center}
\scalebox{0.7}{\includegraphics*[130pt,248pt][484pt,549pt]{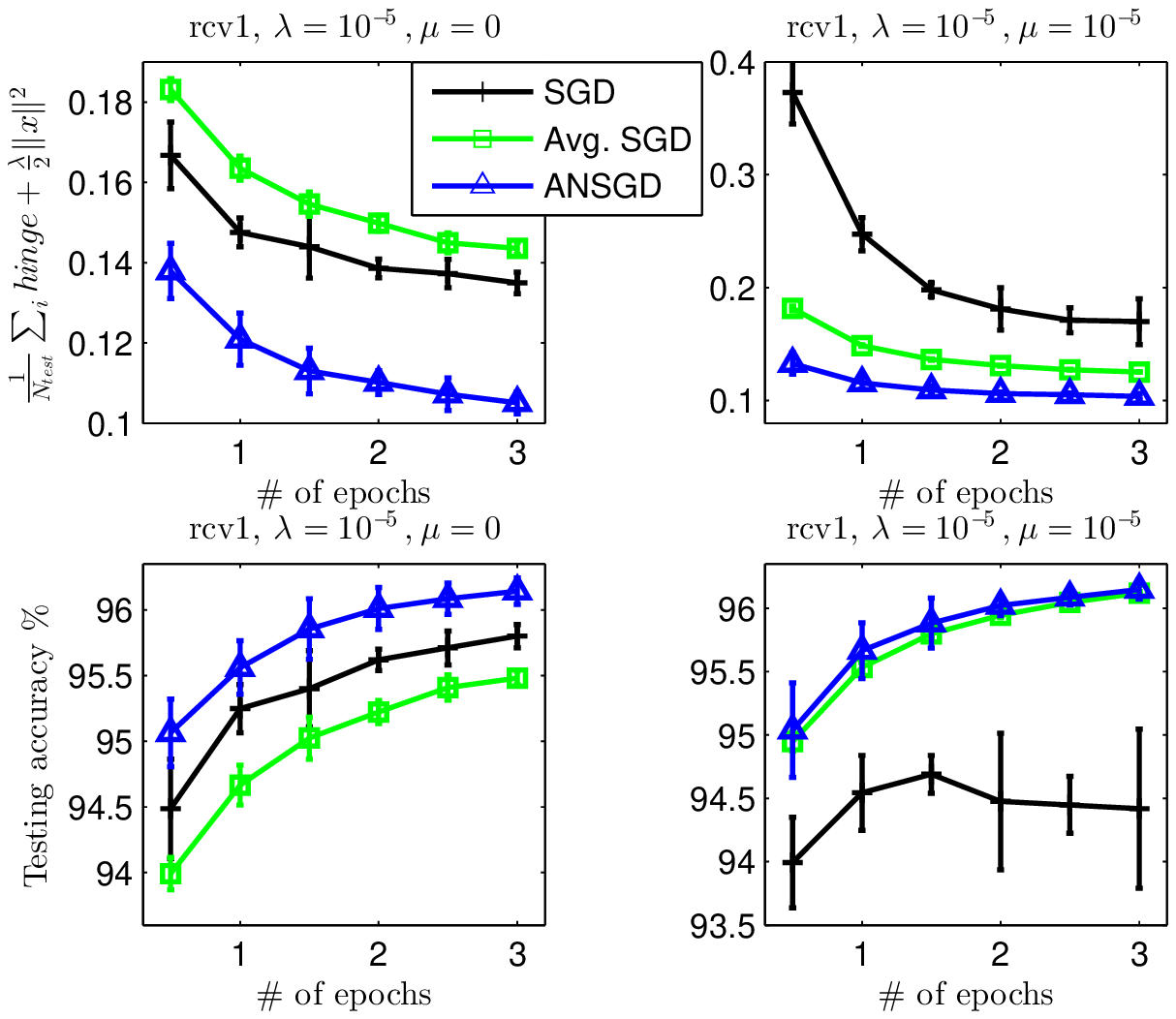}}
\caption{Classification with ``rcv1''.}
\label{fig:rcv1_vs_SGD}
\end{center}
\end{figure}
\begin{figure}[h!]
\begin{center}
\scalebox{0.7}{\includegraphics*[130pt,248pt][484pt,549pt]{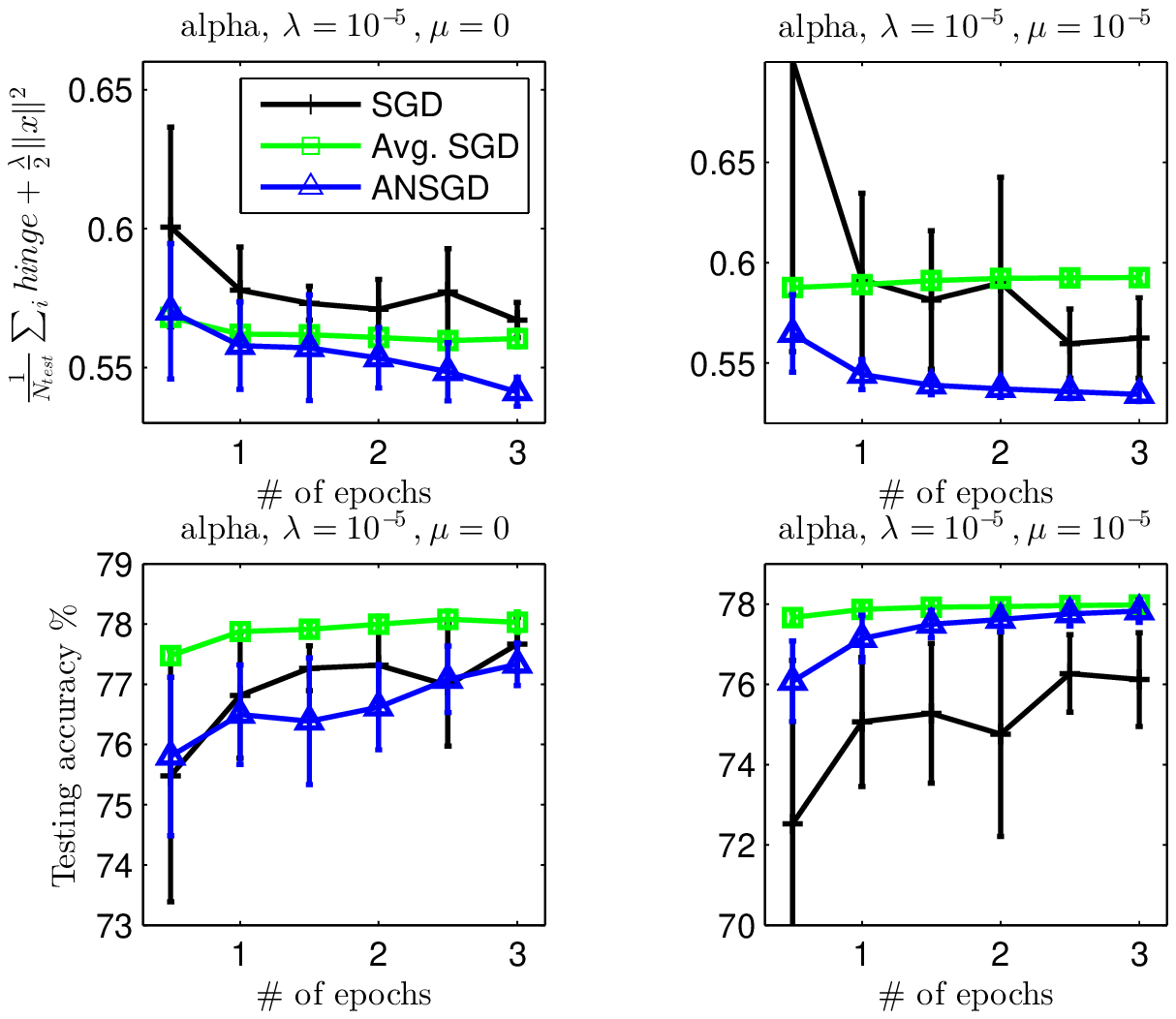}}
\caption{Classification with ``alpha''.}
\label{fig:alpha_vs_SGD}
\end{center}
\end{figure}
\begin{figure}[h!]
\begin{center}
\scalebox{0.7}{\includegraphics*[139pt,314pt][479pt,474pt]{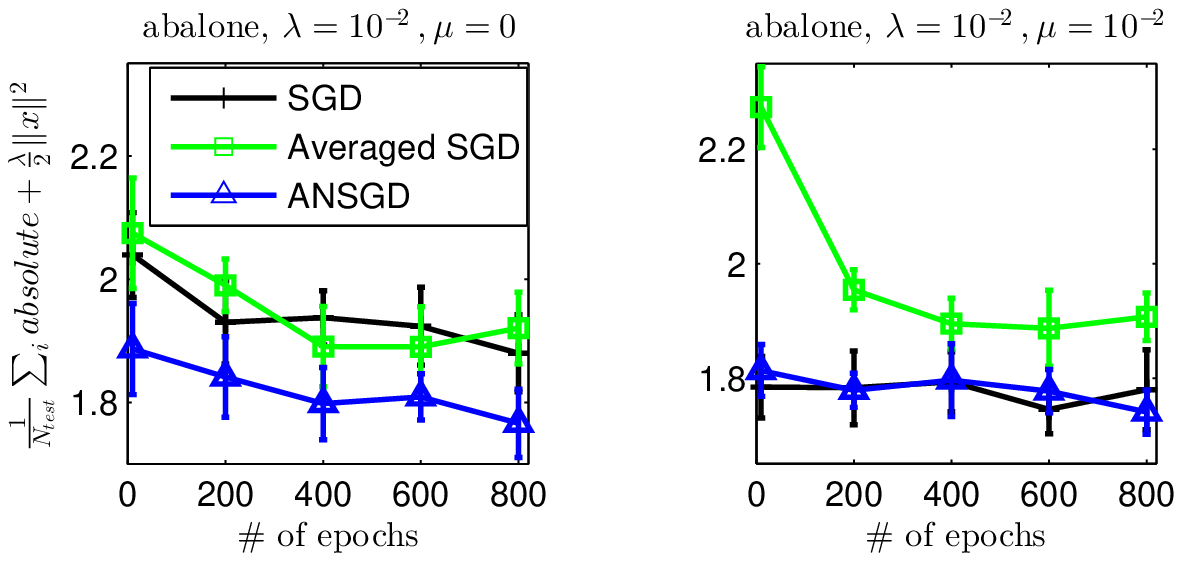}}
\caption{Regression with ``abalone''.}
\label{fig:abalone_loss_vs_SGD}
\end{center}
\end{figure}

It is clear that in all these experiments, \textsf{ANSGD}'s function values converges consistently faster than the other two SGD algorithms. In non-strongly convex experiments, it converges significantly faster than \textsf{SGD} and its averaged version. In strongly convex experiments, it still out performs, and is more robust than strongly convex \textsf{SGD}. \textsf{Averaged SGD} performs well in strongly convex settings, in terms of prediction accuracies, although its errors are still higher than \textsf{ANSGD} in the first three datasets. The only exception is in ``alpha'' (Fig.\ref{fig:alpha_vs_SGD}), where \textsf{Averaged SGD} retains higher function values than \textsf{ANSGD}, but its accuracies are contradictorily higher in early stages. The reason might be that the inexact solution serves as an additional regularization factor, which cannot be predicted by the analysis of convergence rates.

In the second set of experiments, we compare \textsf{ANSGD} with \textsf{AC-SA} and its strongly convex version. Results are in Fig.\ref{fig:svmguide1_vs_ACSA}, \ref{fig:realsim_vs_ACSA}, \ref{fig:rcv1_vs_ACSA} and \ref{fig:alpha_vs_ACSA}. In all experiments our \textsf{ANSGD} significantly outperforms \text{AC-SA}, and is much more stable. These experiments confirm the theoretically better rates discussed in Sec.\ref{subsec:alg_cmp}.
\begin{figure}[h!]
\begin{center}
\scalebox{0.7}{\includegraphics*[130pt,248pt][484pt,549pt]{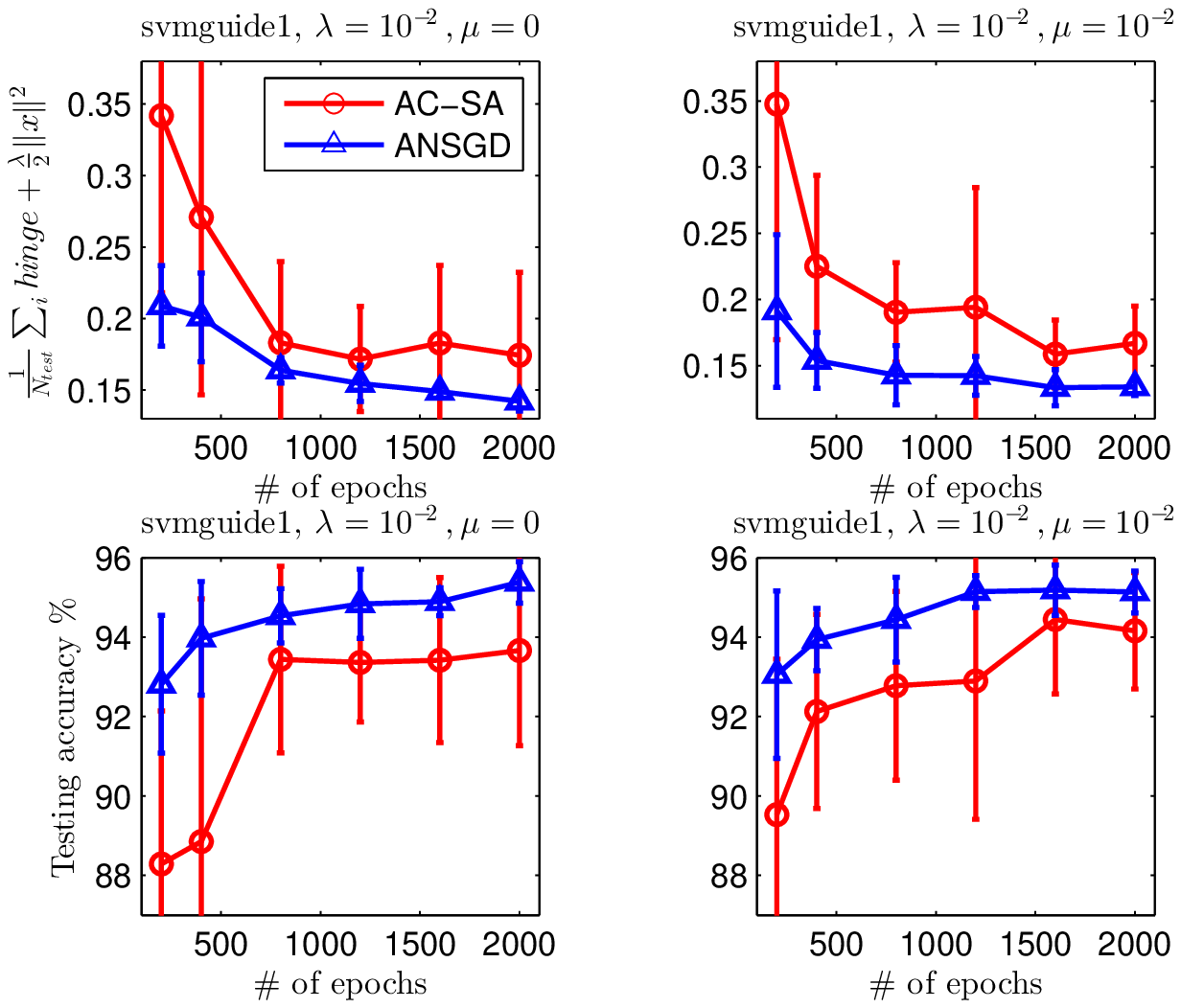}}
\caption{Classification with ``svmguide1''.}
\label{fig:svmguide1_vs_ACSA}
\end{center}
\end{figure}
\begin{figure}[h!]
\begin{center}
\scalebox{0.7}{\includegraphics*[130pt,248pt][484pt,549pt]{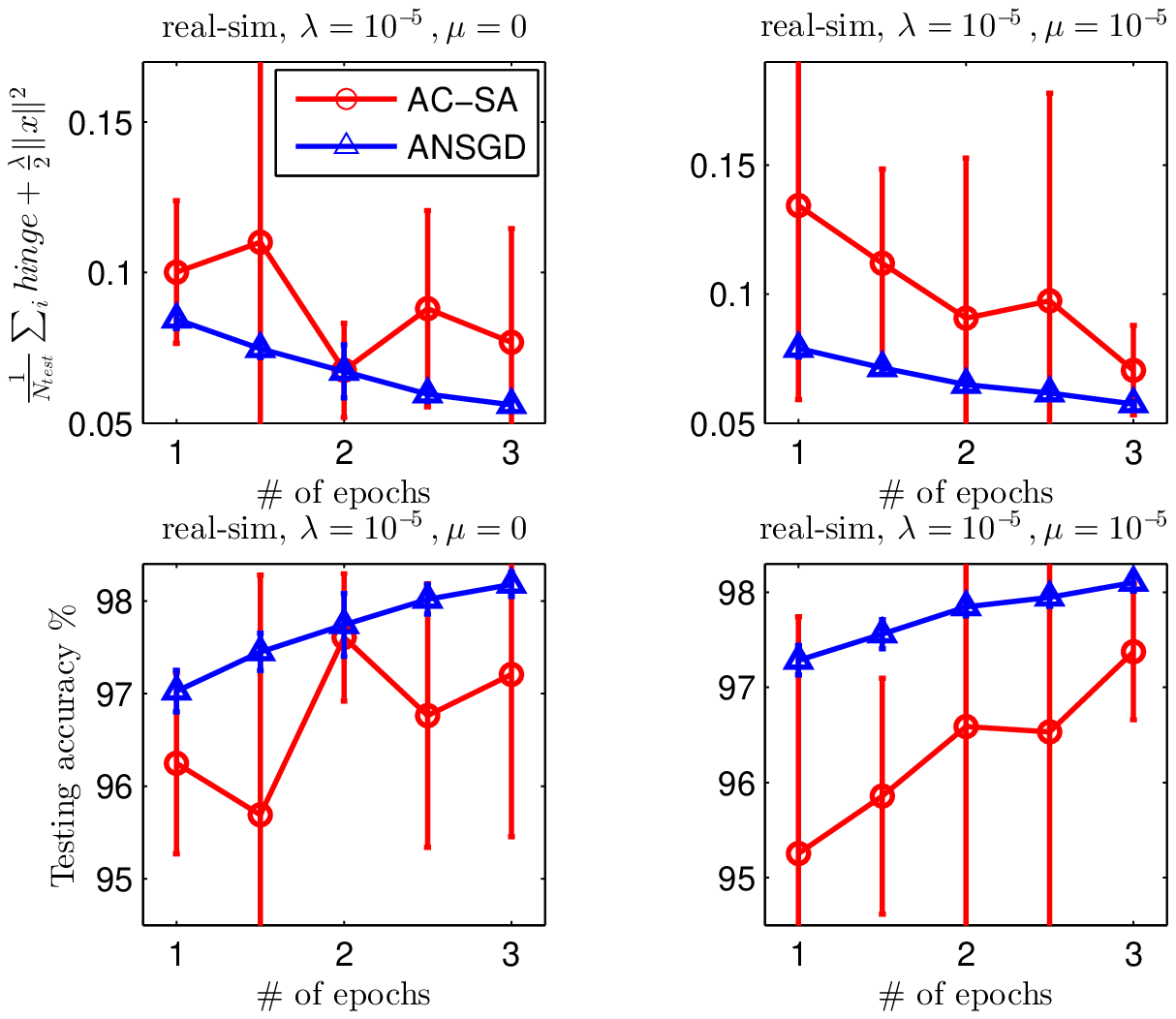}}
\caption{Classification with ``real-sim''.}
\label{fig:realsim_vs_ACSA}
\end{center}
\end{figure}
\begin{figure}[h!]
\begin{center}
\scalebox{0.7}{\includegraphics*[130pt,248pt][484pt,549pt]{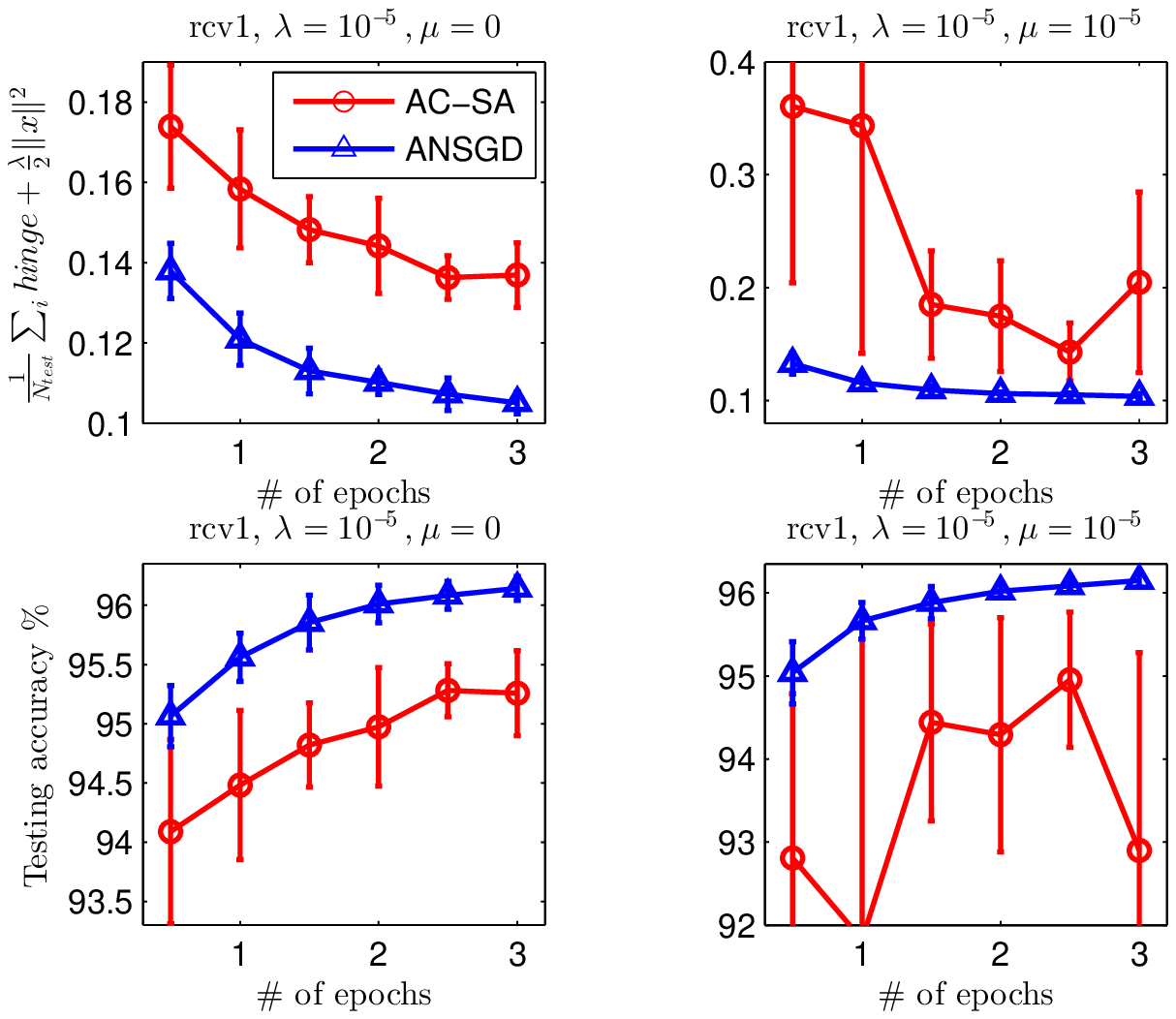}}
\caption{Classification with ``rcv1''.}
\label{fig:rcv1_vs_ACSA}
\end{center}
\end{figure}
\begin{figure}[h!]
\begin{center}
\scalebox{0.7}{\includegraphics*[130pt,248pt][484pt,549pt]{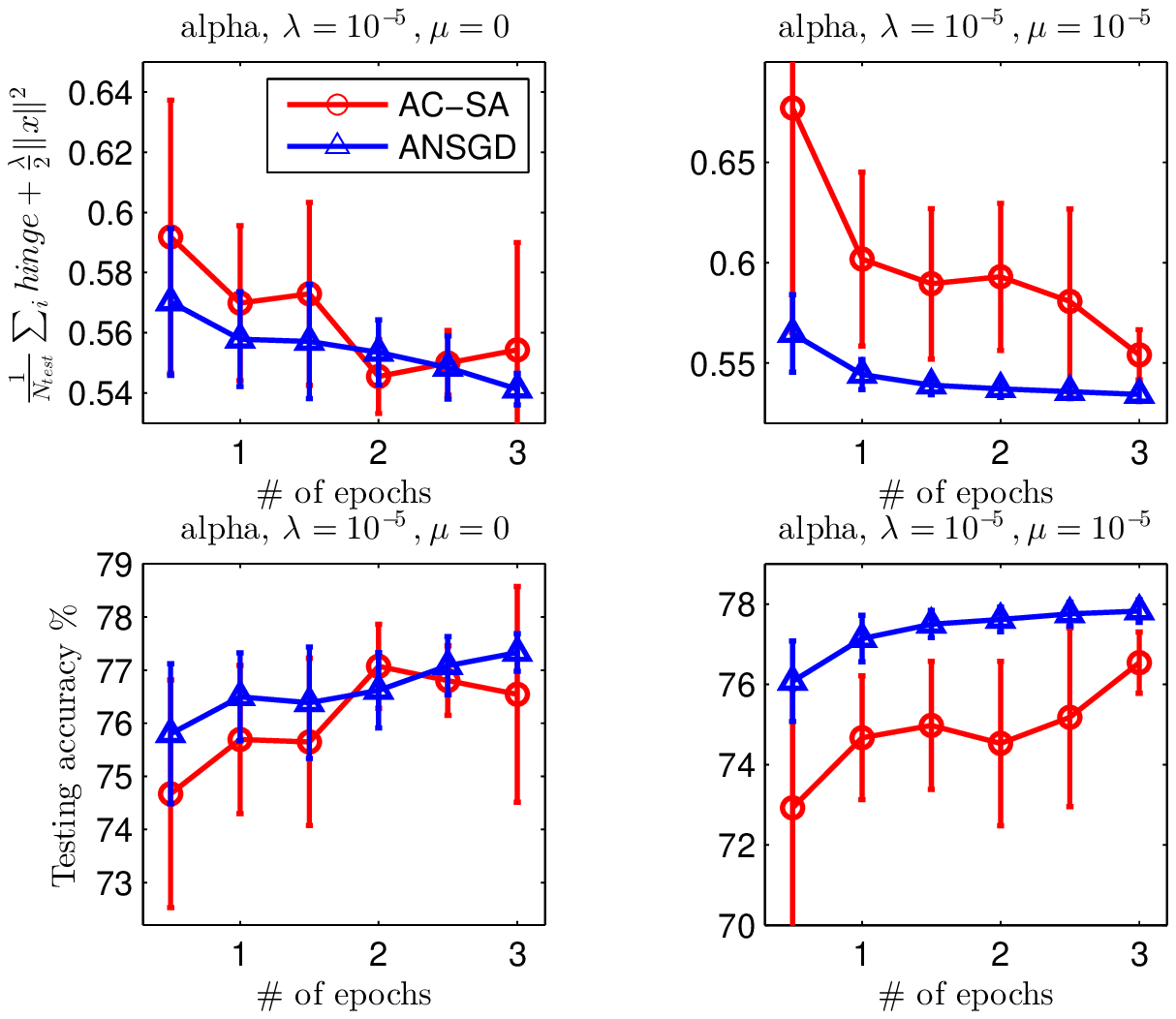}}
\caption{Classification with ``alpha''.}
\label{fig:alpha_vs_ACSA}
\end{center}
\end{figure}

\section{Conclusions and Future Work}
We introduce a different composite setting for nonsmooth functions. Under this setting we propose a stochastic smoothing method and a novel stochastic algorithm \textsf{ANSGD}. Convergence analysis show that it achieves (nearly) optimal rates under both convex and strongly convex assumptions. We also propose a ``Batch-to-Online'' conversion for online learning, and show that optimal regrets can be obtained.

We will extend our method to constrained minimizations, as well as cases when the approximated function $\hat{f}()$ is not easily obtained by maximizing $\mathbf{u}$. Nesterov's excessive gap technique has the ``true'' optimal $1/t^2$ bound, and we will investigate the possibility of integrating it in our algorithm. Exploiting links with statistical learning theories may also be promising.

\onecolumn
\appendix
\section{Proof of Lemma \ref{lm:agd_phi}}
\begin{proof}
\begin{equation}
\begin{split}
&\Phi(\mathbf{x}_t) - \Phi(\mathbf{x}) \\
&= \left[f(\mathbf{x}_t) - f(\mathbf{x})\right] + \left[g(\mathbf{x}_t) - g(\mathbf{x})\right]\\
&=\mathbb{E}_{\boldsymbol{\xi}}\left[f(\mathbf{x}_t,\boldsymbol{\xi}) \right] + \mathbb{E}_{\boldsymbol{\xi}}\left[-f(\mathbf{x},\boldsymbol{\xi}) + g(\mathbf{x}_t,\boldsymbol{\xi}) - g(\mathbf{x},\boldsymbol{\xi}) \right]\\
&= \mathbb{E}_{\boldsymbol{\xi}} \max_{\mathbf{u}\in\mathcal{U}} \bigg\{ \big[\langle A_{\boldsymbol{\xi}}\mathbf{x}_t,\mathbf{u} \rangle -Q(\mathbf{u})-\gamma_t\omega(\mathbf{u}) \big] + \gamma_t\omega(\mathbf{u}) \bigg\} + \mathbb{E}_{\boldsymbol{\xi}}\left[-f(\mathbf{x},\boldsymbol{\xi}) + g(\mathbf{x}_t,\boldsymbol{\xi}) - g(\mathbf{x},\boldsymbol{\xi}) \right]\\
&\leq \mathbb{E}_{\boldsymbol{\xi}} \max_{\mathbf{u}\in\mathcal{U}} \big[\langle A_{\boldsymbol{\xi}}\mathbf{x}_t,\mathbf{u} \rangle -Q(\mathbf{u})-\gamma_t\omega(\mathbf{u}) \big] + \max_{\mathbf{u}\in\mathcal{U}}\big[ \gamma_t\omega(\mathbf{u}) \big] + \mathbb{E}_{\boldsymbol{\xi}}\left[-f(\mathbf{x},\boldsymbol{\xi}) + g(\mathbf{x}_t,\boldsymbol{\xi}) - g(\mathbf{x},\boldsymbol{\xi}) \right]\\
&= \mathbb{E}_{\boldsymbol{\xi}}\left[\hat{f}(\mathbf{x}_t,\boldsymbol{\xi},\gamma_t)\right] + \gamma_t D_{\mathcal{U}}  + \mathbb{E}_{\boldsymbol{\xi}}\left[-f(\mathbf{x},\boldsymbol{\xi}) + g(\mathbf{x}_t,\boldsymbol{\xi}) - g(\mathbf{x},\boldsymbol{\xi}) \right]\\
&\leq \mathbb{E}_{\boldsymbol{\xi}}\left[\hat{f}(\mathbf{x}_t,\boldsymbol{\xi},\gamma_t) -\hat{f}(\mathbf{x},\boldsymbol{\xi},\gamma_t)\right] + \mathbb{E}_{\boldsymbol{\xi}}\left[g(\mathbf{x}_t,\boldsymbol{\xi}) - g(\mathbf{x},\boldsymbol{\xi}) \right] + \gamma_t D_{\mathcal{U}}.
\end{split}
\end{equation}
The last inequality is due to the non-negativity of $\omega()$ and definitions of $f$ (\ref{eq:def_f}) and $\hat{f}$ (\ref{eq:def_hat_f}).
\end{proof}

\section{Proof of Lemma \ref{lm:agd_lm1}}
Before proceeding to the proof of this lemma, we present two auxiliary results. For clarity, in the following lemmas and proofs we use the following notations to denote the smoothly approximated composite function and its expectation:
\begin{equation} F_t(\mathbf{x},\gamma_t):=\hat{f}_t(\mathbf{x})+g_t(\mathbf{x}) = \hat{f}(\mathbf{x},\boldsymbol{\xi}_t,\gamma_t)+g(\mathbf{x},\boldsymbol{\xi}_t)
\end{equation}
and
\begin{equation}
F(\mathbf{x},\gamma_t):=\mathbb{E}_{\boldsymbol{\xi}_t} F_t(\mathbf{x},\gamma_t).
\end{equation}
The first lemma is on the smoothly approximated function and the smoothness parameter $\gamma_t$.
\begin{lemma}\label{lm:gamma}
If $\gamma_t$ is monotonically decreasing with $t$, for any $\mathbf{x}$ and $t\geq 0$,
\begin{equation}
F(\mathbf{x},\gamma_{t}) \leq F(\mathbf{x},\gamma_{t+1}) \leq F(\mathbf{x},\gamma_{t}) + (\gamma_t-\gamma_{t+1})D_{\mathcal{U}},
\end{equation}
where $D_{\mathcal{U}}:=\max_{\mathbf{u}\in\mathcal{U}}\omega(\mathbf{u})$.
\end{lemma}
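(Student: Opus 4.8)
The plan is to notice that the regularization term $g_t(\mathbf{x})=g(\mathbf{x},\boldsymbol{\xi}_t)$ in $F_t(\mathbf{x},\gamma_t)$ carries no dependence on the smoothness parameter, so the entire gap between $F(\mathbf{x},\gamma_t)$ and $F(\mathbf{x},\gamma_{t+1})$ is governed by the smoothed nonsmooth part $\hat{f}(\mathbf{x},\boldsymbol{\xi},\gamma)$. It therefore suffices to establish both inequalities pointwise in $\boldsymbol{\xi}$ for $\hat{f}$, and then apply $\mathbb{E}_{\boldsymbol{\xi}_t}$ and add back the (expectation of the) $\gamma$-independent term $g(\mathbf{x},\boldsymbol{\xi}_t)$; since expectation preserves an almost-sure inequality, the bounds on $F$ follow at once.

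First I would prove the left inequality. Recalling $\hat{f}(\mathbf{x},\boldsymbol{\xi},\gamma)=\max_{\mathbf{u}\in\mathcal{U}}[\langle A_{\boldsymbol{\xi}}\mathbf{x},\mathbf{u}\rangle-Q(\mathbf{u})-\gamma\omega(\mathbf{u})]$ and using the standing assumption $\omega(\cdot)\geq 0$, each fixed-$\mathbf{u}$ objective is non-increasing in $\gamma$. As $\gamma_t\geq\gamma_{t+1}$ by the monotone-decreasing hypothesis, the objective at $\gamma_t$ is dominated pointwise by that at $\gamma_{t+1}$, and taking the supremum over $\mathbf{u}\in\mathcal{U}$ preserves this, yielding $\hat{f}(\mathbf{x},\boldsymbol{\xi},\gamma_t)\leq\hat{f}(\mathbf{x},\boldsymbol{\xi},\gamma_{t+1})$ and hence $F(\mathbf{x},\gamma_t)\leq F(\mathbf{x},\gamma_{t+1})$.

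For the right inequality I would use a ``compare at a common maximizer'' argument. Let $\mathbf{u}^*\in\mathcal{U}$ attain the maximum defining $\hat{f}(\mathbf{x},\boldsymbol{\xi},\gamma_{t+1})$, and then feed the same $\mathbf{u}^*$ as a feasible (not necessarily optimal) point into the maximization for $\hat{f}(\mathbf{x},\boldsymbol{\xi},\gamma_t)$, giving $\hat{f}(\mathbf{x},\boldsymbol{\xi},\gamma_t)\geq\langle A_{\boldsymbol{\xi}}\mathbf{x},\mathbf{u}^*\rangle-Q(\mathbf{u}^*)-\gamma_t\omega(\mathbf{u}^*)$. Subtracting this from the exact expression for $\hat{f}(\mathbf{x},\boldsymbol{\xi},\gamma_{t+1})$ cancels the bilinear and $Q$ terms and leaves $\hat{f}(\mathbf{x},\boldsymbol{\xi},\gamma_{t+1})-\hat{f}(\mathbf{x},\boldsymbol{\xi},\gamma_t)\leq(\gamma_t-\gamma_{t+1})\omega(\mathbf{u}^*)$. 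Since $\mathbf{u}^*\in\mathcal{U}$ we have $\omega(\mathbf{u}^*)\leq D_{\mathcal{U}}$, and because $\gamma_t-\gamma_{t+1}\geq 0$ this gives $\hat{f}(\mathbf{x},\boldsymbol{\xi},\gamma_{t+1})\leq\hat{f}(\mathbf{x},\boldsymbol{\xi},\gamma_t)+(\gamma_t-\gamma_{t+1})D_{\mathcal{U}}$; taking $\mathbb{E}_{\boldsymbol{\xi}_t}$ and adding $g$ completes the claim.

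I expect no substantial obstacle, as the argument is elementary. The only points requiring care are that the pointwise inequalities hold for every $\boldsymbol{\xi}$ so that expectation is legitimate, and that $D_{\mathcal{U}}=\max_{\mathbf{u}\in\mathcal{U}}\omega(\mathbf{u})$ is a deterministic bound independent of $\boldsymbol{\xi}$, which it is by definition. The mild ``hard part'' is simply recognizing that one must test the $\gamma_t$-maximization at the $\gamma_{t+1}$-optimal point (not the reverse), since this orientation is what produces the clean difference $(\gamma_t-\gamma_{t+1})\omega(\mathbf{u}^*)$ with the correct sign.
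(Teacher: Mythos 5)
Your proof is correct, and its key step---testing the $\gamma_t$-maximization at the $\gamma_{t+1}$-optimal point $\mathbf{u}^*$---is precisely the standard proof of the inequality $\max_{\mathbf{u}} h_1(\mathbf{u})-\max_{\mathbf{u}} h_2(\mathbf{u})\leq \max_{\mathbf{u}}\left[h_1(\mathbf{u})-h_2(\mathbf{u})\right]$ that the paper invokes for the right-hand bound, so at that level the two arguments coincide (your left-inequality argument is also the paper's: $\omega\geq 0$ and $\gamma_t\geq\gamma_{t+1}$). Where you genuinely differ is in the order of operations with respect to the expectation: you establish both inequalities pointwise in $\boldsymbol{\xi}$ for $\hat{f}(\mathbf{x},\boldsymbol{\xi},\cdot)$, then take $\mathbb{E}_{\boldsymbol{\xi}}$ and add back the $\gamma$-independent $g$-term. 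The paper instead writes $\mathbb{E}_{\boldsymbol{\xi}}\hat{f}(\mathbf{x},\boldsymbol{\xi},\gamma)=\max_{\mathbf{u}\in\mathcal{U}}\left[\langle \mathbb{E}_{\boldsymbol{\xi}}A_{\boldsymbol{\xi}}\mathbf{x},\mathbf{u}\rangle-Q(\mathbf{u})-\gamma\omega(\mathbf{u})\right]$ and applies the max-of-differences bound to these two deterministic maxima; that exchange of $\mathbb{E}_{\boldsymbol{\xi}}$ and $\max_{\mathbf{u}}$ is not an identity in general (Jensen gives only $\mathbb{E}\max\geq\max\mathbb{E}$), so your pointwise-then-average route is in fact the more careful one, reaching the same conclusion while sidestepping that questionable step at no extra cost. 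The only cosmetic caveat in your write-up is assuming the maximizer $\mathbf{u}^*$ is attained; if attainment were in doubt, the same argument runs with an $\epsilon$-maximizer followed by $\epsilon\to 0$.
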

\begin{proof}
The left inequality is obvious, since $\gamma_t\geq \gamma_{t+1}$ and $\omega(\mathbf{u})$ is nonnegative. For the right inequality,
\begin{equation}
\begin{split}
F(\mathbf{x},\gamma_{t+1}) - F(\mathbf{x},\gamma_{t}) &= \mathbb{E}_{\boldsymbol{\xi}}\hat{f}(\mathbf{x},\boldsymbol{\xi},\gamma_{t+1}) - \mathbb{E}_{\boldsymbol{\xi}}\hat{f}(\mathbf{x},\boldsymbol{\xi},\gamma_t)\\
& = \max_{\mathbf{u}\in\mathcal{U}}\left[ \langle \mathbb{E}_{\boldsymbol{\xi}}A_{\boldsymbol{\xi}}\mathbf{x}, \mathbf{u}\rangle -Q(\mathbf{u}) -\gamma_{t+1}\omega(\mathbf{u})\right] - \max_{\mathbf{u}\in\mathcal{U}}\left[ \langle \mathbb{E}_{\boldsymbol{\xi}}A_{\boldsymbol{\xi}}\mathbf{x}, \mathbf{u}\rangle -Q(\mathbf{u}) -\gamma_{t}\omega(\mathbf{u})\right]\\
&\leq \max_{\mathbf{u}\in\mathcal{U}}\bigg\{ \big[\langle \mathbb{E}_{\boldsymbol{\xi}}A_{\boldsymbol{\xi}}\mathbf{x}, \mathbf{u}\rangle -Q(\mathbf{u}) -\gamma_{t+1}\omega(\mathbf{u})\big] -  \big[\langle \mathbb{E}_{\boldsymbol{\xi}}A_{\boldsymbol{\xi}}\mathbf{x}, \mathbf{u}\rangle -Q(\mathbf{u}) -\gamma_{t}\omega(\mathbf{u})\big] \bigg\}\\
&= \max_{\mathbf{u}\in\mathcal{U}} \left[ (\gamma_t-\gamma_{t+1}) \omega(\mathbf{u})\right].
\end{split}
\end{equation}
\end{proof}

The second lemma is about proximal methods using Bregman divergence as prox-functions, which is a direct result of optimality conditions. It appeared in \cite{lan11osaascsco1}(Lemma 2), and is an extension of the ``$3$-point identity'' \cite{chen93threepoint}(Lemma 3.1).
\begin{lemma}\cite{lan11osaascsco1}\label{lm:3point_ext}
Let $l(\mathbf{x})$ be a convex function. Let scalars $s_1,s_2\geq 0$. For any vectors $\mathbf{u}$ and $\mathbf{v}$, denote their Bregman divergence as $D(\mathbf{u},\mathbf{v})$. If $\forall \mathbf{x},\mathbf{u},\mathbf{v}$
\begin{equation}
\mathbf{x}^* = \arg\min_{\mathbf{x}} l(\mathbf{x}) + s_1 D(\mathbf{u},\mathbf{x}) + s_2 D(\mathbf{v},\mathbf{x}),
\end{equation}
then
\begin{equation}
l(\mathbf{x}) + s_1 D(\mathbf{u},\mathbf{x}) + s_2 D(\mathbf{v},\mathbf{x}) \geq l(\mathbf{x}^*) + s_1 D(\mathbf{u},\mathbf{x}^*) + s_2 D(\mathbf{v},\mathbf{x}^*) + (s_1+s_2) D(\mathbf{x}^*,\mathbf{x}).
\end{equation}
\end{lemma}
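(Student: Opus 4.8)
The plan is to combine the first-order optimality condition at $\mathbf{x}^*$ with a two-point expansion of each Bregman divergence (the ``three-point identity'') so that all the terms linear in $\mathbf{x}-\mathbf{x}^*$ collapse against the optimality inequality, leaving only the clean remainder $(s_1+s_2)D(\mathbf{x}^*,\mathbf{x})$. Throughout I would use the convention under which $D(\mathbf{a},\mathbf{b})=\omega(\mathbf{b})-\omega(\mathbf{a})-\langle\nabla\omega(\mathbf{a}),\mathbf{b}-\mathbf{a}\rangle$, so that the optimization variable occupies the second slot and each map $\mathbf{x}\mapsto D(\mathbf{a},\mathbf{x})$ is convex; consequently the objective $h(\mathbf{x}):=l(\mathbf{x})+s_1 D(\mathbf{u},\mathbf{x})+s_2 D(\mathbf{v},\mathbf{x})$ is convex and its unconstrained minimizer is fully characterized by a stationarity condition.

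First I would record the optimality condition. Since $\mathbf{x}^*$ minimizes the convex $h$, there is a subgradient $l'(\mathbf{x}^*)\in\partial l(\mathbf{x}^*)$ with $l'(\mathbf{x}^*)+s_1\nabla_{\mathbf{x}}D(\mathbf{u},\mathbf{x}^*)+s_2\nabla_{\mathbf{x}}D(\mathbf{v},\mathbf{x}^*)=\mathbf{0}$, where $\nabla_{\mathbf{x}}D(\mathbf{a},\mathbf{x}^*)=\nabla\omega(\mathbf{x}^*)-\nabla\omega(\mathbf{a})$; equivalently, $\langle l'(\mathbf{x}^*)+s_1(\nabla\omega(\mathbf{x}^*)-\nabla\omega(\mathbf{u}))+s_2(\nabla\omega(\mathbf{x}^*)-\nabla\omega(\mathbf{v})),\,\mathbf{x}-\mathbf{x}^*\rangle\geq 0$ for every $\mathbf{x}$. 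Next I would lower-bound $l(\mathbf{x})-l(\mathbf{x}^*)\geq\langle l'(\mathbf{x}^*),\mathbf{x}-\mathbf{x}^*\rangle$ by convexity of $l$, and expand each divergence difference through the identity $D(\mathbf{a},\mathbf{x})-D(\mathbf{a},\mathbf{x}^*)=D(\mathbf{x}^*,\mathbf{x})+\langle\nabla\omega(\mathbf{x}^*)-\nabla\omega(\mathbf{a}),\mathbf{x}-\mathbf{x}^*\rangle$, which follows by a one-line cancellation of the $\omega(\mathbf{a})$ and $\langle\nabla\omega(\mathbf{a}),\cdot\rangle$ terms.

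Finally I would sum these three estimates for $h(\mathbf{x})-h(\mathbf{x}^*)$. The contributions linear in $\mathbf{x}-\mathbf{x}^*$ reassemble exactly into the bracket of the optimality inequality, so they are nonnegative and drop out, while the two copies of $D(\mathbf{x}^*,\mathbf{x})$ produced by the expansion combine into $(s_1+s_2)D(\mathbf{x}^*,\mathbf{x})$, yielding the claim. The single point demanding care is the Bregman bookkeeping: the remainder must emerge as $D(\mathbf{x}^*,\mathbf{x})$ (variable in the second slot) rather than $D(\mathbf{x},\mathbf{x}^*)$, so keeping the argument order consistent between the expansion and the optimality term is the one place where a slip would break the cancellation. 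Apart from that the argument is purely algebraic, and no additional properties of $\omega$ (such as its strong-convexity modulus) are invoked.
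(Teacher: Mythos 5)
The paper never actually proves this lemma: it imports it verbatim from \cite{lan11osaascsco1} (Lemma 2 there), remarking only that it extends the three-point identity of \cite{chen93threepoint}, and then applies it solely in the Euclidean case $D(\mathbf{u},\mathbf{v})=\frac{1}{2}\|\mathbf{u}-\mathbf{v}\|^2$ inside the proof of Lemma \ref{lm:agd_lm1}. Your proof is correct and is essentially the standard argument behind the cited result: write the first-order optimality condition at $\mathbf{x}^*$, lower-bound $l(\mathbf{x})-l(\mathbf{x}^*)$ by convexity, and use the exact Bregman expansion $D(\mathbf{a},\mathbf{x})-D(\mathbf{a},\mathbf{x}^*)=D(\mathbf{x}^*,\mathbf{x})+\langle\nabla\omega(\mathbf{x}^*)-\nabla\omega(\mathbf{a}),\mathbf{x}-\mathbf{x}^*\rangle$, after which the terms linear in $\mathbf{x}-\mathbf{x}^*$ reassemble into the optimality inequality and can be dropped, leaving $(s_1+s_2)D(\mathbf{x}^*,\mathbf{x})$. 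You are also right that the only delicate point is the argument-order convention: the statement holds when the optimization variable sits in the slot that makes $\mathbf{x}\mapsto D(\mathbf{a},\mathbf{x})$ convex (gradient of $\omega$ evaluated at the first argument) and the remainder then comes out as $D(\mathbf{x}^*,\mathbf{x})$; under the opposite convention the objective need not even be convex, so the lemma could fail. In the paper's actual use this subtlety is invisible because $\frac{1}{2}\|\cdot\|^2$ is symmetric. Your closing observation is also accurate: only convexity and differentiability of $\omega$ and $s_1,s_2\geq 0$ (to make the objective convex, so that stationarity characterizes the minimizer) are needed, and no strong-convexity modulus enters. In short, you supply the self-contained proof that the paper outsources to its reference.
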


We are now ready to prove Lemma \ref{lm:agd_lm1}.
\begin{proof}[Proof of Lemma \ref{lm:agd_lm1}]
Due to Lemma \ref{lm:exp_lips} and Lipschitz-smoothness of $g(\mathbf{x})$, $F(\mathbf{x},\gamma_{t+1})$ has a Lipschitz smooth constant $L_{F_{t+1}}:= \frac{\mathbb{E}_{\boldsymbol{\xi}}\|A_{\boldsymbol{\xi}}\|^2}{\gamma_{t+1}\zeta}+L_g$. It follows that
\begin{equation}
\begin{split}
&F(\mathbf{x}_{t+1},\gamma_{t+1})\\
&\leq F(\mathbf{y}_t,\gamma_{t+1}) + \langle \nabla F(\mathbf{y}_t,\gamma_{t+1}), \mathbf{x}_{t+1}- \mathbf{y}_t \rangle + \frac{L_{F_{t+1}}}{2}\|\mathbf{x}_{t+1}- \mathbf{y}_t\|^2\\
&=(1-\alpha_t)F(\mathbf{y}_t,\gamma_{t+1}) + \alpha_t F(\mathbf{y}_t,\gamma_{t+1}) + \langle \nabla F(\mathbf{y}_t,\gamma_{t+1}), \mathbf{x}_{t+1}- \mathbf{y}_t \rangle + \frac{L_{F_{t+1}}}{2}\|\mathbf{x}_{t+1}- \mathbf{y}_t\|^2\\
&=(1-\alpha_t)F(\mathbf{y}_t,\gamma_{t+1}) + \langle \nabla F(\mathbf{y}_t,\gamma_{t+1}), (1-\alpha_t)(\mathbf{x}_{t}-\mathbf{y}_{t})\rangle +\\
&\ \ \ \ \alpha_t F(\mathbf{y}_t,\gamma_{t+1}) + \langle \nabla F(\mathbf{y}_t,\gamma_{t+1}), \mathbf{x}_{t+1}- \mathbf{y}_t -(1-\alpha_t)(\mathbf{x}_{t}-\mathbf{y}_{t}) \rangle + \frac{L_{F_{t+1}}}{2}\|\mathbf{x}_{t+1}- \mathbf{y}_t\|^2\\
&\leq (1-\alpha_t)F(\mathbf{x}_t,\gamma_{t+1}) + \alpha_t F(\mathbf{y}_t,\gamma_{t+1}) + \langle \nabla F(\mathbf{y}_t,\gamma_{t+1}), \mathbf{x}_{t+1}- \mathbf{y}_t -(1-\alpha_t)(\mathbf{x}_{t}-\mathbf{y}_{t}) \rangle +\\ &\ \ \ \ \frac{L_{F_{t+1}}}{2}\|\mathbf{x}_{t+1}- \mathbf{y}_t\|^2,
\end{split}
\end{equation}
where the last inequality is due to the convexity of $F()$. Subtracting $F(\mathbf{x},\gamma_{t+1})$ from both sides of the above inequality we have:
\begin{equation}\label{eq:lm_main_auxx}
\begin{split}
F(\mathbf{x}_{t+1},\gamma_{t+1}) &- F(\mathbf{x},\gamma_{t+1}) \leq (1-\alpha_t)F(\mathbf{x}_t,\gamma_{t+1}) - F(\mathbf{x},\gamma_{t+1})\\
&+ \alpha_t F(\mathbf{y}_t,\gamma_{t+1}) + \langle \nabla F(\mathbf{y}_t,\gamma_{t+1}), \mathbf{x}_{t+1}- \mathbf{y}_t -(1-\alpha_t)(\mathbf{x}_{t}-\mathbf{y}_{t}) \rangle + \frac{L_{F_{t+1}}}{2}\|\mathbf{x}_{t+1}- \mathbf{y}_t\|^2\\
&\leq (1-\alpha_t)\big[F(\mathbf{x}_t,\gamma_{t})+(\gamma_t-\gamma_{t+1})D_{\mathcal{U}}\big] - F(\mathbf{x},\gamma_{t+1})\\
&+ \alpha_t F(\mathbf{y}_t,\gamma_{t+1}) + \langle \nabla F(\mathbf{y}_t,\gamma_{t+1}), \mathbf{x}_{t+1}- \mathbf{y}_t -(1-\alpha_t)(\mathbf{x}_{t}-\mathbf{y}_{t}) \rangle + \frac{L_{F_{t+1}}}{2}\|\mathbf{x}_{t+1}- \mathbf{y}_t\|^2\\
&\leq (1-\alpha_t)\big[ F(\mathbf{x}_t,\gamma_{t}) - F(\mathbf{x},\gamma_{t}) \big] -\alpha_t F(\mathbf{x},\gamma_{t+1}) +(1-\alpha_t)(\gamma_t-\gamma_{t+1})D_{\mathcal{U}} \\
&+ \alpha_t F(\mathbf{y}_t,\gamma_{t+1}) + \langle \nabla F(\mathbf{y}_t,\gamma_{t+1}), \mathbf{x}_{t+1}- \mathbf{y}_t -(1-\alpha_t)(\mathbf{x}_{t}-\mathbf{y}_{t}) \rangle + \frac{L_{F_{t+1}}}{2}\|\mathbf{x}_{t+1}- \mathbf{y}_t\|^2,
\end{split}
\end{equation}
where the last two inequalities are due to Lemma \ref{lm:gamma}.

Denoting $\Delta_t := F(\mathbf{x}_t,\gamma_t) - F(\mathbf{x},\gamma_t)$ and $\sigma_t(\mathbf{x}) := \nabla F_t(\mathbf{x},\gamma_t)-\nabla F(\mathbf{x},\gamma_t)$ we can rewrite (\ref{eq:lm_main_auxx}) as:
\begin{equation}\label{eq:main_lm_Delta}
\begin{split}
&\Delta_{t+1} - (1-\alpha_t)\Delta_t - (1-\alpha_t)(\gamma_t-\gamma_{t+1})D_{\mathcal{U}}\\
&\leq \alpha_t F(\mathbf{y}_t,\gamma_{t+1}) - \alpha_t F(\mathbf{x},\gamma_{t+1}) + \langle \nabla F(\mathbf{y}_t,\gamma_{t+1}), \mathbf{x}_{t+1}- \mathbf{y}_t -(1-\alpha_t)(\mathbf{x}_{t}-\mathbf{y}_{t}) \rangle + \frac{L_{F_{t+1}}}{2}\|\mathbf{x}_{t+1}- \mathbf{y}_t\|^2\\
&\overset{(\ref{eq:g_mu_str})}{\leq} \alpha_t F(\mathbf{y}_t,\gamma_{t+1}) - \alpha_t\left[ F(\mathbf{y}_t,\gamma_{t+1})+\langle \nabla F(\mathbf{y}_t,\gamma_{t+1}), \mathbf{x}-\mathbf{y}_t \rangle + \frac{\mu}{2}\|\mathbf{x}-\mathbf{y}_t\|^2\right] +\\
&\ \ \ \ \langle \nabla F(\mathbf{y}_t,\gamma_{t+1}), \mathbf{x}_{t+1}- \mathbf{y}_t -(1-\alpha_t)(\mathbf{x}_{t}-\mathbf{y}_{t}) \rangle + \frac{L_{F_{t+1}}}{2}\|\mathbf{x}_{t+1}- \mathbf{y}_t\|^2\\
&= -\alpha_t\left[ \langle \nabla F_{t+1}(\mathbf{y}_t,\gamma_{t+1}) - \sigma_{t+1}(\mathbf{y}_t), \mathbf{x}-\mathbf{y}_t\rangle + \frac{\mu}{2}\|\mathbf{x}-\mathbf{y}_t\|^2\right]+\\
&\ \ \ \ \langle \nabla F(\mathbf{y}_t,\gamma_{t+1}), \mathbf{x}_{t+1}- \mathbf{y}_t -(1-\alpha_t)(\mathbf{x}_{t}-\mathbf{y}_{t}) \rangle + \frac{L_{F_{t+1}}}{2}\|\mathbf{x}_{t+1}- \mathbf{y}_t\|^2\\
&= -\alpha_t\left[ \langle \nabla F_{t+1}(\mathbf{y}_t,\gamma_{t+1}), \mathbf{x}-\mathbf{y}_t\rangle + \frac{\mu}{2}\|\mathbf{x}-\mathbf{y}_t\|^2 + \frac{\theta_t}{2}\|\mathbf{x}-\mathbf{v}_t\|^2 \right] + \frac{\alpha_t\theta_t}{2}\|\mathbf{x}-\mathbf{v}_t\|^2+\\
&\ \ \ \ \langle \nabla F(\mathbf{y}_t,\gamma_{t+1}), \mathbf{x}_{t+1}- \mathbf{y}_t -(1-\alpha_t)(\mathbf{x}_{t}-\mathbf{y}_{t}) \rangle + \frac{L_{F_{t+1}}}{2}\|\mathbf{x}_{t+1}- \mathbf{y}_t\|^2 + \langle \sigma_{t+1}(\mathbf{y}_t),\alpha_t(\mathbf{x}-\mathbf{y}_t) \rangle\\
&\leq -\alpha_t\left[ \langle \nabla F_{t+1}(\mathbf{y}_t,\gamma_{t+1}), \mathbf{v}_{t+1}-\mathbf{y}_t\rangle + \frac{\mu}{2}\|\mathbf{v}_{t+1}-\mathbf{y}_t\|^2 + \frac{\theta_t}{2}\|\mathbf{v}_{t+1}-\mathbf{v}_t\|^2 + \frac{\mu+\theta_t}{2}\|\mathbf{x}-\mathbf{v}_{t+1}\|^2 \right] +\\
&\ \ \ \ \frac{\alpha_t\theta_t}{2}\|\mathbf{x}-\mathbf{v}_t\|^2+ \langle \nabla F(\mathbf{y}_t,\gamma_{t+1}), \mathbf{x}_{t+1}- \mathbf{y}_t -(1-\alpha_t)(\mathbf{x}_{t}-\mathbf{y}_{t}) \rangle + \frac{L_{F_{t+1}}}{2}\|\mathbf{x}_{t+1}- \mathbf{y}_t\|^2 +\\
 &\ \ \ \ \langle \sigma_{t+1}(\mathbf{y}_t),\alpha_t(\mathbf{x}-\mathbf{y}_t) \rangle,
\end{split}
\end{equation}
where the last inequality is due to Lemma \ref{lm:3point_ext} (taking $D(\mathbf{u},\mathbf{v})=\frac{1}{2}\|\mathbf{u}-\mathbf{v}\|^2$) and the definition of $\mathbf{v}_{t+1}$:
\begin{equation}
\mathbf{v}_{t+1} := \arg\min_{\mathbf{x}} \langle \nabla F_{t+1}(\mathbf{y}_t,\gamma_{t+1}), \mathbf{x}-\mathbf{y}_t\rangle + \frac{\mu}{2}\|\mathbf{x}-\mathbf{y}_t\|^2 + \frac{\theta_t}{2}\|\mathbf{x}-\mathbf{v}_t\|^2.
\end{equation}
Minimizing the above directly leads to Line 4 of Alg.\ref{alg:ansgd}:
\begin{equation}\label{eq:main_lm_v_t1}
\mathbf{v}_{t+1} = \frac{\theta_t \mathbf{v}_t + \mu\mathbf{y}_t - \nabla F_{t+1}(\mathbf{y}_t,\gamma_{t+1})}{\mu+\theta_t}.
\end{equation}
Base on this updating rule, it is easy to verify the following inequality:
\begin{equation}\label{eq:main_lm_aux_ineq}
\begin{split}
&-\alpha_t\left[ \frac{\mu}{2}\|\mathbf{v}_{t+1}-\mathbf{y}_t\|^2 + \frac{\theta_t}{2}\|\mathbf{v}_{t+1}-\mathbf{v}_t\|^2 \right] \\
&\leq -\frac{\alpha_t}{2}\left[ \frac{\mu\theta_t}{\mu+\theta_t}\|\mathbf{v}_t-\mathbf{y}_t\|^2 + \frac{1}{\mu+\theta_t}\|\nabla F_{t+1}(\mathbf{y}_t,\gamma_{t+1})\|^2\right]\leq \frac{-\alpha_t}{2\left(\mu+\theta_t\right)}\|\nabla F_{t+1}(\mathbf{y}_t,\gamma_{t+1})\|^2.
\end{split}
\end{equation}
To set $\mathbf{x}_{t+1}$ (Line 3 of Alg.\ref{alg:ansgd}), we follow the classic stochastic gradient descent, such that $\|\mathbf{x}_{t+1}-\mathbf{y}_t\|^2$ can be bounded in terms of $\|\nabla F_{t+1}(\mathbf{y}_t,\gamma_{t+1})\|^2$:
$\mathbf{x}_{t+1} = \mathbf{y}_t - \eta_t \nabla F_{t+1}(\mathbf{y}_t,\gamma_{t+1})$.

Hence
\begin{equation}\label{lm:main_lm_proj}
\|\mathbf{x}_{t+1}-\mathbf{y}_t\|^2 = \eta_t^2\|\nabla F_{t+1}(\mathbf{y}_t,\gamma_{t+1})\|^2,
\end{equation}
and
\begin{equation}\label{lm:main_lm_proj2}
\begin{split}
&\langle\nabla F(\mathbf{y}_t,\gamma_{t+1}),\mathbf{x}_{t+1}-\mathbf{y}_t \rangle = \langle\nabla F_{t+1}(\mathbf{y}_t,\gamma_{t+1})-\sigma_{t+1}(\mathbf{y}_t),\mathbf{x}_{t+1}-\mathbf{y}_t \rangle \\
&\leq -\eta_t\|\nabla F_{t+1}(\mathbf{y}_t,\gamma_{t+1})\|^2 + \eta_t \|\sigma_{t+1}(\mathbf{y}_t)\| \cdot \|\nabla F_{t+1}(\mathbf{y}_t,\gamma_{t+1})\|.
\end{split}
\end{equation}
Inserting (\ref{eq:main_lm_v_t1},\ref{eq:main_lm_aux_ineq},\ref{lm:main_lm_proj} and \ref{lm:main_lm_proj2}) into (\ref{eq:main_lm_Delta}) we have
\begin{equation}
\begin{split}
&\Delta_{t+1} \leq (1-\alpha_t)\Delta_t + (1-\alpha_t)(\gamma_t-\gamma_{t+1})D_{\mathcal{U}} +\\
&\frac{\alpha_t}{2}\left[ \theta_t\|\mathbf{x}-\mathbf{v}_t\|^2 - (\mu+\theta_t)\|\mathbf{x}-\mathbf{v}_{t+1}\|^2 \right] + \left\langle \sigma_{t+1}(\mathbf{y}_t),\alpha_t(\mathbf{x}-\mathbf{y}_t)+(1-\alpha_t)(\mathbf{x}_t-\mathbf{y}_t) \right\rangle+\\
&\eta_t\|\sigma_{t+1}(\mathbf{y}_t)\| \cdot \|\nabla F_{t+1}(\mathbf{y}_t,\gamma_{t+1})\| + \left[ \frac{\alpha_t}{2(\mu+\theta_t)} +\frac{L_{t+1}}{2}\eta_t^2 - \eta_t \right] \|\nabla F_{t+1}(\mathbf{y}_t,\gamma_{t+1})\|^2+\\
&\left\langle \nabla F_{t+1}(\mathbf{y}_t,\gamma_{t+1}), \frac{-\alpha_t\theta_t(\mathbf{v}_t-\mathbf{y}_t)}{\mu+\theta_t} -(1-\alpha_t)(\mathbf{x}_t-\mathbf{y}_t) \right\rangle.
\end{split}
\end{equation}
Taking the last term $\frac{-\alpha_t\theta_t(\mathbf{v}_t-\mathbf{y}_t)}{\mu+\theta_t} -(1-\alpha_t)(\mathbf{x}_t-\mathbf{y}_t) = 0$ recovers the updating rule of $\mathbf{y}_t$ (Line 1 of Alg.\ref{alg:ansgd}). Hence our result follows.
\end{proof}

\section{Proof of Theorem \ref{thm:result_general_convex}}
\begin{proof}
It is easy to verify that by taking $\alpha_t = \frac{2}{t+2}$, $\gamma_{t+1}=\alpha_t$ and $\theta_t = L_g\alpha_t+\frac{\mathbb{E}\|A_{\boldsymbol{\xi}}\|^2}{\zeta} + \frac{\Omega}{\sqrt{\alpha_t}}$, we have $\forall t>1$:
\begin{equation}
(1-\alpha_{t-1})(\gamma_{t-1}-\gamma_{t}) \leq \gamma_{t} - \gamma_{t+1},
\end{equation}
and
\begin{equation}
(1-\alpha_t)\frac{\alpha_{t-1}}{2(\theta_{t-1}-\alpha_{t-1}\mathbb{E}L_{t})} \leq \frac{\alpha_t}{2(\theta_t-\alpha_t\mathbb{E}L_{t+1})}.
\end{equation}

Next we define and bound weighted sums of $D_t^2$ that will be used later.
\begin{equation}\label{eq:def_psi}
\begin{split}
\Psi(t) := &\left[ \alpha_t\theta_t - (1-\alpha_t)\alpha_{t-1}\theta_{t-1}\right]D_t^2 + (1-\alpha_t)\left[\alpha_{t-1}\theta_{t-1}-(1-\alpha_{t-1})\alpha_{t-2}\theta_{t-2} \right]D_{t-1}^2 +\\
& (1-\alpha_t)(1-\alpha_{t-1})\left[ \alpha_{t-2}\theta_{t-2}-(1-\alpha_{t-2})\alpha_{t-3}\theta_{t-3} \right]D_{t-2}^2 + \cdots,
\end{split}
\end{equation}
where replacing $\alpha_t$ and $\theta_t$ by their definitions we have $\forall t$:
\begin{equation}\label{eq:bound_psi_aux}
\alpha_t\theta_t-(1-\alpha_t)\alpha_{t-1}\theta_{t-1} = \frac{4L_g}{(t+1)^2(t+2)^2} + \frac{2\mathbb{E}\|A_{\boldsymbol{\xi}}\|^2/\zeta}{(t+1)(t+2)} + \frac{\sqrt{2}\left[(t+1)\sqrt{t+2}-t\sqrt{t+1}\right]\Omega}{(t+1)(t+2)}
\end{equation}
Substituting (\ref{eq:bound_psi_aux}) into (\ref{eq:def_psi})  and using invoking the definition of $D^2$ we have $\forall t$:
\begin{equation}\label{eq:bound_psi}
\begin{split}
&\Psi(t) \leq 4L_g D^2 \left[ \frac{1}{(t+1)^2(t+2)^2} + \frac{t(t+1)}{(t+1)(t+2)}\frac{1}{t^2(t+1)^2} + \frac{(t-1)t}{(t+1)(t+2)}\frac{1}{(t-1)^2t^2} +\cdots\right]\\
&+ \frac{2\mathbb{E}\|A_{\boldsymbol{\xi}}\|^2 D^2}{\zeta} \left[ \frac{1}{(t+1)(t+2)} + \frac{t(t+1)}{(t+1)(t+2)}\frac{1}{t(t+1)} + \frac{(t-1)t}{(t+1)(t+2)}\frac{1}{(t-1)t} + \cdots \right]\\
&+ \sqrt{2}\Omega D^2 \bigg[  \frac{(t+1)\sqrt{t+2}-t\sqrt{t+1}}{(t+1)(t+2)} + \frac{t(t+1)}{(t+1)(t+2)}\frac{t\sqrt{t+1}-(t-1)\sqrt{t}}{t(t+1)} + \\
&\ \ \ \ \frac{(t-1)t}{(t+1)(t+2)}\frac{(t-1)\sqrt{t}-(t-2)\sqrt{t-1}}{(t-1)t} + \cdots\bigg]\\
&= \frac{4L_g D^2}{(t+1)(t+2)}\left[ \left(\frac{1}{t+1}-\frac{1}{t+2}\right) + \left(\frac{1}{t}-\frac{1}{t+1}\right)+ \left(\frac{1}{t-1}-\frac{1}{t}\right)+ \cdots\right]\\
& + \frac{2\mathbb{E}\|A_{\boldsymbol{\xi}}\|^2 D^2}{\zeta} \left[ \frac{1}{(t+1)(t+2)} +   \frac{1}{(t+1)(t+2)} +  \frac{1}{(t+1)(t+2)} + \cdots\right]\\
& + \frac{\sqrt{2}\Omega D^2}{(t+1)(t+2)} \left[ (t+1)\sqrt{t+2} - t\sqrt{t+1} + t\sqrt{t+1} - (t-1)\sqrt{t} + (t-1)\sqrt{t} - (t-2)\sqrt{t-1} + \cdots\right]\\
&\leq \alpha_t\theta_t D^2.
\end{split}
\end{equation}

Since $\mu=0$, by recursively applying (\ref{eq:result_base}) and $1-\alpha_0 = 0$ we have
\begin{equation}
\begin{split}
\mathbb{E}\Delta_{t+1} &\leq (1-\alpha_t)\mathbb{E}\Delta_t + \alpha_t\theta_t\left(D_t^2-D_{t+1}^2\right) + \frac{\alpha_t}{2(\theta_t-\alpha_t\mathbb{E}L_{t+1})}
\sigma^2 + (1-\alpha_t)(\gamma_t-\gamma_{t+1})D_{\mathcal{U}}\\
&\leq(1-\alpha_t)(1-\alpha_{t-1})\mathbb{E}\Delta_{t-1} + \alpha_t\theta_t\left(D_t^2-D_{t+1}^2\right) +
\left(1-\alpha_t\right)\alpha_{t-1}\theta_{t-1}\left(D_{t-1}^2-D_{t}^2\right) + \\
&\ \ \ \ \frac{2\alpha_t}{2(\theta_t-\alpha_t\mathbb{E}L_{t+1})}
\sigma^2 + 2(1-\alpha_t)(\gamma_t-\gamma_{t+1})D_{\mathcal{U}}\\
&\leq \cdots\\
&\overset{(\ref{eq:def_psi})}{\leq} \prod_{i=0}^t (1-\alpha_i)\Delta_0 + \Psi(t) + \frac{(t+1)\alpha_t}{2(\theta_t-\alpha_t\mathbb{E}L_{t+1})}
\sigma^2 + (t+1)(1-\alpha_t)(\gamma_t-\gamma_{t+1})D_{\mathcal{U}}\\
&\overset{(\ref{eq:bound_psi})}{\leq} \alpha_t\theta_t D^2 + \frac{\sigma^2}{\theta_t-\alpha_t\mathbb{E}L_{t+1}} + \frac{2D_{\mathcal{U}}}{t+2}\\
&= \left[ \alpha_t^2\mathbb{E}L_{t+1} + \Omega\sqrt{\alpha_t} \right] D^2 + \frac{\sqrt{\alpha_t} \sigma^2}{\Omega} + \frac{2D_{\mathcal{U}}}{t+2}.
\end{split}
\end{equation}
Combining with Lemma \ref{lm:agd_phi} we have $\forall \mathbf{x}$
\begin{equation}
\begin{split}
\mathbb{E}\left[\Phi(\mathbf{x}_{t+1}) - \Phi(\mathbf{x}) \right] &\leq \left[ \alpha_t^2\mathbb{E}L_{t+1} + \Omega\sqrt{\alpha_t} \right] D^2 + \frac{\sqrt{\alpha_t} \sigma^2}{\Omega} + \frac{2D_{\mathcal{U}}}{t+2} + \gamma_{t+1} D_{\mathcal{U}}\\
&\leq \alpha_t^2 L_g D^2 + \left(\gamma_{t+1}+\frac{2}{t+2}\right) D_{\mathcal{U}} + \alpha_t^2 \frac{\mathbb{E}\|A_{\boldsymbol{\xi}}\|^2}{\gamma_{t+1}\zeta} D^2 + \sqrt{\alpha_t} \left(\Omega D^2+\frac{\sigma^2}{\Omega}\right).
\end{split}
\end{equation}
Taking $\gamma_{t+1} = \alpha_t = \frac{2}{t+2}$ our result follows.
\end{proof}

\section{Proof of Theorem \ref{thm:result_strongly_convex}}
\begin{proof}
It is easy to verify that by taking $\alpha_t=\frac{2}{t+1}$, we have $\forall t\geq 1$
\begin{equation}\label{eq:th2pf_alpha2}
(1-\alpha_{t-1})(\gamma_{t-1}-\gamma_{t}) \leq \gamma_{t} - \gamma_{t+1}.
\end{equation}
and
\begin{equation}\label{eq:th2pf_alpha}
(1-\alpha_t)\alpha_{t-1}^2 \leq \alpha_t^2
\end{equation}
Denote
\begin{equation}
S_t := \alpha_t\theta_t - (1-\alpha_t)(\alpha_{t-1}\theta_{t-1} + \mu\alpha_{t-1}).
\end{equation}
Taking $\theta_t = L_g\alpha_t + \frac{\mu}{2\alpha_t}+\frac{\mathbb{E}\|A_{\boldsymbol{\xi}}\|^2}{\zeta} - \mu$ it is easy to verify that $\forall t\geq 1$:
\begin{equation}\label{eq:th2pf_theta}
S_t= 4L_g\frac{1}{(t+1)^2 t^2} + \frac{2\mathbb{E}\|A_{\boldsymbol{\xi}}\|^2}{\zeta}\left[ \frac{1}{t}-\frac{1}{t+1} \right] - \frac{\mu}{t+1}.
\end{equation}
We want to find the smallest iteration index $C$ such that: when $t\geq C$, $S_t \leq 0$. Without any knowledge about $L_g$ and $\mathbb{E}\|A_{\boldsymbol{\xi}}\|^2$, minimizing $S_t$ w.r.t $t$ does not yield an analytic form of $C$. Hence we simply let
\begin{equation}\label{eq:th2_ct_1}
4L_g\frac{1}{(t+1)^2 t^2} \leq \frac{\mu}{2(t+1)} ,
\end{equation}
and
\begin{equation}\label{eq:th2_ct_2}
\frac{2\mathbb{E}\|A_{\boldsymbol{\xi}}\|^2}{\zeta}\left[ \frac{1}{t}-\frac{1}{t+1} \right] \leq \frac{\mu}{2(t+1)}.
\end{equation}
Inequality (\ref{eq:th2_ct_1}) is satisfied when
\begin{equation}
t\geq 2\left( \frac{L_g}{\mu}\right)^{1/3},
\end{equation}
and (\ref{eq:th2_ct_2}) is satisfied when
\begin{equation}
t\geq \frac{4\mathbb{E}\|A_{\boldsymbol{\xi}}\|^2}{\zeta\mu}.
\end{equation}
Combining these two we reach the definition of $C$ in (\ref{eq:thm2_def_c}). Next we proceed to prove the bound.

As defined in the theorem, we denote $\tilde{D}^2=\max_{0\leq i\leq{\min(t,C)}} D_i^2$. By recursively applying (\ref{eq:result_base}) for $0\leq i\leq t$ and noticing that $S_t\leq 0\ \forall t\geq C$, $1-\alpha_1 = 0$ we have
\begin{equation}
\begin{split}
\mathbb{E}\Delta_{t+1} &\overset{(\ref{eq:th2pf_alpha2})}{\leq} \prod_{i=0}^t (1-\alpha_i)\Delta_0 + (t+1)(1-\alpha_t)(\gamma_t-\gamma_{t+1})D_{\mathcal{U}} +\\
&\ \ \ \ \left[(\alpha_t\theta_t)D_t^2 -(\alpha_t\theta_t+\mu\alpha_t)D_{t+1}^2  \right] +\\
&\ \ \ \ (1-\alpha_t)\left[(\alpha_{t-1}\theta_{t-1})D_{t-1}^2 -(\alpha_{t-1}\theta_{t-1}+\mu\alpha_{t-1})D_{t}^2 \right] +\\
&\ \ \ \ (1-\alpha_t)(1-\alpha_{t-1})\left[(\alpha_{t-2}\theta_{t-2})D_{t-2}^2 -(\alpha_{t-2}\theta_{t-2}+\mu\alpha_{t-2})D_{t-1}^2 \right] +\\
&\ \ \ \ \cdots + \prod_{i=1}^t (1-\alpha_i)\left[ (\alpha_{0}\theta_{0})D_{0}^2 -(\alpha_{0}\theta_{0}+\mu\alpha_{0})D_{1}^2 \right] + \\
&\ \ \ \ \frac{\sigma^2}{\mu}\left[\alpha_t^2 + (1-\alpha_{t})\alpha_{t-1}^2 + \cdots + \prod_{i=1}^t (1-\alpha_i) \alpha_0^2 \right]\\
& \overset{(\ref{eq:th2pf_alpha})}{\leq} \frac{2D_{\mathcal{U}}}{t+1} + \tilde{D}^2 \prod_{i=C-1}^t(1-\alpha_i)\left[ \alpha_{C-2}\theta_{C-2}-(1-\alpha_{C-2})(\alpha_{C-3}\theta_{C-3}+\mu\alpha_{C-3})\right]+\\
&\ \ \ \ \ \ \tilde{D}^2 \prod_{i=C-2}^t(1-\alpha_i)\left[ \alpha_{C-3}\theta_{C-3}-(1-\alpha_{C-3})(\alpha_{C-4}\theta_{C-4}+\mu\alpha_{C-4})\right]+\\
&\ \ \ \ \ \ \cdots + \tilde{D}^2 \prod_{i=2}^t (1-\alpha_i)\left[ \alpha_1\theta_1-(1-\alpha_1)(\alpha_{0}\theta_{0}+\mu\alpha_{0})\right]+
\frac{t\alpha_t^2\sigma^2}{\mu}
\end{split}
\end{equation}
Applying (\ref{eq:th2pf_theta}) by ignoring the $-\frac{\mu}{t+1}$ term to the above inequality we can bound the coefficients of $L_g$ and $\frac{\mathbb{E}\|A_{\boldsymbol{\xi}}\|^2}{\zeta}$ parts separately as follows.

When $t\geq C$, for the $L_g$ part:
\begin{equation}
\begin{split}
&\frac{\prod_{i=C-1}^t(1-\alpha_i)}{(C-1)^2 (C-2)^2} + \frac{\prod_{i=C-2}^t(1-\alpha_i)}{(C-2)^2 (C-3)^2} + \frac{\prod_{i=C-3}^t(1-\alpha_i)}{(C-3)^2 (C-4)^2} + \cdots + \frac{\prod_{i=2}^t(1-\alpha_i)}{2^2 \cdot1^2}\\
&= \frac{1}{(t+1)t} \left[\frac{1}{(C+2)(C+1)}+ \frac{1}{(C+1)C} + \frac{1}{C(C-1))} + \cdots + \frac{1}{2\cdot 1} \right]\\
&\leq \frac{1}{(t+1)t} \sum_{i=1}^{C+1}\frac{1}{i^2} \leq \frac{\pi^2}{6t(t+1)}
\end{split}
\end{equation}
For the $\frac{\mathbb{E}\|A_{\boldsymbol{\xi}}\|^2}{\zeta}$ part:
\begin{equation}
\begin{split}
&\Pi_{i=C-1}^t(1-\alpha_i)\left(\frac{1}{C-2}-\frac{1}{C-1} \right) + \Pi_{i=C-2}^t(1-\alpha_i)\left(\frac{1}{C-3}-\frac{1}{C-2} \right)+ \cdots + \prod_{i=2}^t(1-\alpha_i) \left( 1-\frac{1}{2} \right)\\
& = \frac{C-1}{(t+1)t}-\frac{C-2}{(t+1)t} + \frac{C-2}{(t+1)t}-\frac{C-3}{(t+1)t}+ \cdots + \frac{2}{(t+1)t} - \frac{1}{(t+1)t}\\
& = \frac{C-1}{(t+1)t}-\frac{1}{(t+1)t} = \frac{C-2}{t(t+1)}.
\end{split}
\end{equation}
Combining with Lemma \ref{lm:agd_phi} and taking $\gamma_{t+1} = \alpha_t = \frac{2}{t+1}$ we have $\forall \mathbf{x}$:
\begin{equation}
\begin{split}
\mathbb{E}\left[\Phi(\mathbf{x}_{t+1}) - \Phi(\mathbf{x}) \right] &\leq \frac{2D_{\mathcal{U}}}{t+1}+ \frac{2\pi^2L_g \tilde{D}^2}{3t(t+1)} +\frac{2(C-2)\mathbb{E}\|A_{\boldsymbol{\xi}}\|^2 \tilde{D}^2/\zeta }{t(t+1)} +\frac{\sigma^2}{\mu(t+1)} + \gamma_{t+1}D_{\mathcal{U}}\\
&=\frac{2\pi^2L_g \tilde{D}^2}{3t(t+1)} +\frac{2(C-2)\mathbb{E}\|A_{\boldsymbol{\xi}}\|^2 \tilde{D}^2/\zeta }{t(t+1)}+\frac{4D_{\mathcal{U}}}{t+1} +\frac{\sigma^2}{\mu(t+1)}.
\end{split}
\end{equation}
When $0\leq t\leq C$, one can simply put $C=t$ in the above, and this completes our proof.
\end{proof}

\section{Proof of Proposition \ref{prop:bto}}
\begin{proof}
\begin{equation*}
\begin{split}
&\mathbb{E}_{\boldsymbol{\xi}_{[t]}} R(t)=\mathbb{E}_{\boldsymbol{\xi}_{[t]}} \sum_{i=0}^{t-1} \left[\Phi(\mathbf{x}_{i}, \boldsymbol{\xi}_{i+1}) - \Phi(\mathbf{x}_{t}^*, \boldsymbol{\xi}_{i+1})\right]\\
&=\mathbb{E}_{\boldsymbol{\xi}_{[t]}} \sum_{i=0}^{t-1}  \bigg\{ \left[\Phi(\mathbf{x}_{i}, \boldsymbol{\xi}_{i+1})-\Phi(\mathbf{x}^*)\right] +\left[\Phi(\mathbf{x}^*)-\Phi(\mathbf{x}_{t}^*, \boldsymbol{\xi}_{i+1})\right] \bigg\}\\
&= \sum_{i=0}^{t-1}\mathbb{E}_{\boldsymbol{\xi}_{[i+1]}} \left[\Phi(\mathbf{x}_{i}, \boldsymbol{\xi}_{i+1})-\Phi(\mathbf{x}^*)\right] + \mathbb{E}_{\boldsymbol{\xi}_{[t]}} \sum_{i=0}^{t-1}\left[ \Phi(\mathbf{x}^*)-\Phi(\mathbf{x}_t^*) \right] + \mathbb{E}_{\boldsymbol{\xi}_{[t]}} \sum_{i=0}^{t-1}\left[ \Phi(\mathbf{x}_t^*)-\Phi(\mathbf{x}_{t}^*, \boldsymbol{\xi}_{i+1}) \right]\\
&\leq \sum_{i=0}^{t-1}\mathbb{E}_{\boldsymbol{\xi}_{[i+1]}} \left[\Phi(\mathbf{x}_{i}, \boldsymbol{\xi}_{i+1})-\Phi(\mathbf{x}^*)\right] + \mathbb{E}_{\boldsymbol{\xi}_{[t]}} \sum_{i=0}^{t-1}\left[ \Phi(\mathbf{x}_t^*)-\Phi(\mathbf{x}_{t}^*, \boldsymbol{\xi}_{i+1}) \right]\\
&= \sum_{i=0}^{t-1} \mathbb{E}_{\boldsymbol{\xi}_{[i]}}\left[\Phi(\mathbf{x}_i) - \Phi(\mathbf{x}^*) \right] + \mathbb{E}_{\boldsymbol{\xi}_{[t]}}\sum_{i=0}^{t-1}\left[\Phi(\mathbf{x}_t^*)-\Phi(\mathbf{x}_t^*,\boldsymbol{\xi}_{i+1}) \right].
\end{split}
\end{equation*}
\end{proof}

\bibliography{ANSGD_JMLR}

\end{document}